\newtheoremstyle{nospace}
{2pt}   				
{2pt}   				
{\itshape}  			
{} 		  		    	
{\bfseries} 			
{.}         			
{5pt plus 1pt minus 1pt}
{}          			
\theoremstyle{nospace} \newtheorem{theorem}{Theorem}
\theoremstyle{nospace} 
\theoremstyle{nospace} 
\theoremstyle{nospace} \newtheorem{remark}{Remark}
\theoremstyle{nospace} 
\theoremstyle{nospace} 
\theoremstyle{nospace} \newtheorem{corollary}{Corollary}
\theoremstyle{nospace} \newtheorem{assumption}{Assumption}
\newcommand{\M}{\mathcal{M}}
\newcommand{\supp}{\text{spt}}
\newcommand{\R}{\mathbb{R}}
\newcommand{\RR}{\mathbb{R}}
\newcommand{\NN}{\mathbb{N}}
\newcommand{\N}{\mathbb{N}}
\newcommand{\Lf}{\mathcal{L}_f}
\newcommand{\Lg}{\mathcal{L}_g}
\newcommand{\vect}{\text{vec}}
\begin{document}

\title{Technical Report: Convex Optimization of Nonlinear Feedback Controllers via Occupation Measures \thanks{If you want to cite this report, please use the following reference instead:
A. Majumdar, R. Vasudevan, M. M. Tobenkin, and R. Tedrake, ``Convex Optimization of Nonlinear Feedback Controllers via Occupation Measures'', In \textit{Proceedings of Robotics: Science and Systems (RSS)}, 2013. }}


\author{\authorblockN{Anirudha Majumdar,
Ram Vasudevan, Mark M. Tobenkin, and Russ Tedrake}
\authorblockA{Computer Science and Artificial Intelligence Laboratory\\
Massachusetts Institute of Technology\\
Cambridge, MA 02139\\
Email: \{anirudha,ramv,mmt,russt\}@mit.edu}}




\maketitle

\begin{abstract}
In this paper, we present an approach for designing feedback controllers for polynomial systems that maximize the size of the time-limited backwards reachable set (BRS). 
We rely on the notion of \emph{occupation measures} to pose the synthesis problem as an infinite dimensional linear program (LP) and provide finite dimensional approximations of this LP in terms of semidefinite programs (SDPs). The solution to each SDP yields a polynomial control policy and an outer approximation of the largest achievable BRS. In contrast to traditional Lyapunov based approaches which are non-convex and require feasible initialization, our approach is convex and does not require any form of initialization.
The resulting time-varying controllers and approximated reachable sets
are well-suited for use in a trajectory library or feedback motion planning algorithm.
We demonstrate the efficacy and scalability of our approach on five nonlinear systems.
\end{abstract}


\section{Introduction}
\label{sec:introduction}

Dynamic robotic tasks such as flying, running, or walking demand controllers that push hardware platforms to their physical limit while managing input saturation, nonlinear dynamics, and underactuation. Though motion planning algorithms have begun addressing several of these tasks \cite{LaValle06}, the constructed open loop motion plans are typically insufficient due to their inability to correct for deviations from a planned path. Despite the concerted effort of several communities,
the design of feedback control laws for underactuated nonlinear systems with input saturation remains challenging.

Popular techniques for control synthesis rely either on feedback linearization \cite{Sastry99} or on linearizing the dynamics about a nominal operating point in order to make Linear Quadratic Regulator based techniques or Linear Model Predictive Control \cite{Bemporad02a} applicable. Unfortunately, feedback linearization is generally untenable for underactuated systems especially in the presence of actuation limits, and those techniques that rely on linearizations lead to controllers that are valid only locally around the operating point. Dynamic Programming and Hamilton-Jacobi Bellman Equation based techniques \cite{Ding10,Mitchell05a} have also been used for feedback control design. However, these methods suffer from the curse of dimensionality, and can require exorbitant grid resolution for even low dimensional systems \cite{Munos02}.

\subsection{Our Contributions}

In this paper, we attempt to address these issues and present an approach for designing feedback controllers that maximize the time-limited \emph{backward reachable set} (BRS), i.e. the set of points that reach a given target set at a specified finite time. Our approach is inspired by the method presented in \cite{Henrion12}, which describes a framework based on \emph{occupation measures} for computing the BRS for polynomial systems. In this paper, we extend this method to the control synthesis problem. Our contributions are three--fold. First, in Section \ref{sec:problemformulation}, we formulate the design of the feedback controller that generates the largest BRS as an infinite dimensional linear program (LP) over the space of nonnegative measures. Second, in Section \ref{sec:approxprob}, we construct a sequence of finite dimensional relaxations to our infinite dimensional LP in terms of semidefinite programs (SDPs). Finally, in Section \ref{sec:convergence}, we prove two convergence properties of our sequence of finite dimensional approximations: first that each solution to the sequence of SDPs is an \emph{outer approximation} to the largest possible BRS with asymptotically vanishing conservatism; and second, that there exists a subsequence of the SDP solutions that weakly converges to an optimizing solution of our original infinite dimensional LP.


The result of our analysis is a method capable of designing feedback controllers for nonlinear underactuated robotic systems in the presence of input saturation without resorting to linear analysis.
This is valuable for systems with degenerate linearizations, and can result in considerable improvements in performance for many practical robotic systems.  
Our method could also be used to augment existing feedback motion planning algorithms such as the LQR-Trees approach presented in \cite{Majumdar12a,Tedrake10}, which computes and sequences together BRSs in order to drive a desired set of initial conditions to some target set. 
Our approach could be substituted for the local, linear control synthesis employed by the aforementioned papers with the benefit of selecting control laws that maximize the size of the BRS in the presence of input saturations.
As a result, the number of trajectories required in a library in order to fill the space of possible initial conditions could be significantly reduced. In some cases, a \emph{single} nonlinear feedback controller could stabilize an entire set of initial conditions that previously required a library of locally-linear controllers.
We illustrate the performance of our approach in Section \ref{sec:examples} on five examples, whose source code we make available.

%
 
\subsection{Relationship to Lyapunov-Based Techniques}

Our approach is most comparable to those that use Lyapunov's criteria for stability in order to synthesize a controller that maximizes the region of attraction (ROA) of a particular target set. These criteria can be checked for polynomial systems by employing sums-of-squares (SOS) programming. However, the computation of the ROA and subsequent controller design are typically non-convex programs in this formulation \cite{Prajna04a}. 
The resulting optimization programs are bilinear in the decision variables and are generally solved by employing some form of bilinear alternation \cite{Jarvis-Wloszek05, Majumdar13}. Such methods are not guaranteed to converge to global optima (or necessarily even local optima) and require feasible initializations. 

The relationship between our approach and the Lyapunov-based approaches can be understood by examining the dual of our infinite dimensional LP, which is posed on the space of nonnegative continuous functions. This dual program certifies that a certain set cannot reach the target set within a pre-specified time for any valid control law. The complement of this set is an outer approximation of the BRS. This subtle change transforms the non-convex feedback control synthesis problem written in terms of Lyapunov's criteria into the convex synthesis problem which we present herein.  
The convexity of the control synthesis problem we present also has parallels to the convexity observed in \cite{Prajna04a} during the design of controllers to achieve global almost-everywhere asymptotic stability. However, this method is not easily extended to provide regional certificates, which are of greater practical utility in robotics since robotic systems are generally not globally stabilizable.


\section{Problem Formulation}
\label{sec:problemformulation}

In this section, we formalize our problem of interest, construct an infinite dimensional linear program (LP), and prove that the solution of this LP is equivalent to solving our problem of interest. We make substantial use of measure theory, and the unfamiliar reader may wish to consult \cite{Folland99} for an introduction.

\subsection{Notation}
Given an element $y \in \R^{n\times m}$, let $[y]_{ij}$ denote the $(i,j)$--th component of $y$.  We use the same convention for elements belonging to any multidimensional vector space.
 By $\NN$ we denote the non-negative integers, and $\NN_k^n$ refers to those $\alpha \in \NN^n$ with $|\alpha| = \sum_{i=1}^n [\alpha]_i \leq k$.
Let $\R[y]$ denote the ring of real polynomials in the variable $y$. 
For a compact set $K$, let $\M(K)$ denote the space of signed Radon measures supported on $K$. The elements of $\M(K)$ can be identified with linear functionals acting on the space of continuous functions $C(K)$, that is, as elements of the dual space $C(K)'$ \cite[Corollary 7.18]{Folland99}. The duality pairing of a measure $\mu \in \left(\M(K)\right)^p$ on a test function $v \in \left(C(K)\right)^p$ is:
\begin{equation}
\label{eq:duality_pairing}
\langle \mu, v \rangle = \sum_{i=1}^p\int_K [v]_i(z) d[\mu]_i(z).
\end{equation}

\subsection{Problem Statement}

Consider the control-affine system with feedback control
\begin{equation}
\label{eq:control_affine}
	\begin{aligned}
		\dot{x}(t) &= f\left(t,x(t)\right) + g\left(t,x(t)\right)u(t,x),
	\end{aligned}
\end{equation}
with state $x(t) \in \R^n$ and control action $u(t,x) \in \RR^m$, such that the components of the vector $f$ and the matrix $g$ are polynomials.
Our goal is to find a feedback controller, $u(t,x)$, that 
maximizes the BRS for a given target set while respecting the {\it input constraint}
\begin{equation}
  u(t,x) \in U = [a_1,b_1] \times \ldots\times  [a_m,b_m],
\end{equation}
where $\{a_j\}_{j=1}^m,\{b_j\}_{j=1}^m \subset \R$. Define the \emph{bounding set}, and \emph{target set} as:
\begin{equation}
	\begin{aligned}
		X &= \big\{x\in \R^n \mid h_{X_i}(x) \geq 0, \forall i = \{1,\ldots,n_X\} \big\}, \\
		X_T &= \big\{x\in \R^n \mid h_{T_i}(x) \geq 0, \forall i = \{1,\ldots,n_T\}\big\},
	\end{aligned}
\end{equation}
respectively, for given polynomials $h_{X_i},h_{T_i} \in \R[x]$.

Given a finite final time $T > 0$, let the BRS for a particular control policy $u \in L^1([0,T] \times X, U )$,  be defined as:
\begin{align}
	\label{eq:ROA}
		{\cal X}(u) = \Big\{ x_0 \in \R^n \mid &\dot{x}(t) = f\big(t,x(t)\big) + g\big(t,x(t)\big)u\big(t,x(t)\big) \nonumber \\ &\text{a.e.}~t \in [0,T],~ x(0) = x_0,~x(T) \in X_T, \nonumber \\& x(t) \in X~ \forall t \in [0,T] \Big\}.
\end{align}
${\cal X}(u)$ is the set of initial conditions for solutions\footnote{Solutions in this context are understood in the Carath\'eodory sense, that is, as absolutely continuous functions whose derivatives satisfy the right hand side of Equation \eqref{eq:control_affine} almost everywhere \cite[Chapter 10]{Aubin08}.} to Equation \eqref{eq:control_affine} that remain in the bounding set
and arrive in the target set at the final time when control law $u$ is applied. Our aim is to find a controller $u^*\in L^1([0,T] \times X, U )$, that maximizes the volume of the BRS:
\begin{equation}
\label{eq:control_synthesis_prob}
	\lambda( {\cal X}(u^*)) \geq \lambda( {\cal X}(u)), \quad \forall u \in L^1([0,T] \times X, U ),
\end{equation}
where $\lambda$ is the Lebesgue measure.
$u^*$ need not be unique. We denote the BRS corresponding to  $u^*$ by $\cal X^*$. To solve this problem, we make the following assumptions: 
\begin{assumption} \label{assume:ball}
	$X$ and $X_T$ are compact sets. 
\end{assumption}
\begin{remark} \label{rem:assumptions}
  Without loss of generality, we assume that $U = \{ u \in \R^m \mid -1 \leq u_j \leq 1 ~ \forall j \in \{1,\ldots,m\}\}$ (since $g$ can be arbitrarily shifted and scaled). 
  Assumption \ref{assume:ball} ensures the existence of a polynomial $h_{X_i}(x) = C_X - \left\|x \right\|_2^2$ for a large enough $C_X > 0$. 
\end{remark}

\subsection{Liouville's Equation}

We solve this problem by defining measures over $[0,T]\times X$ whose supports' model the evolution of \emph{families of trajectories}.
An initial condition and its relationship with respect to the terminal set can be understood via Equation \eqref{eq:control_affine}, but the relationship between a family of trajectories and the terminal set must be understood through a different lens.
First, define the linear operator $\Lf:C^1\big([0,T] \times X \big) \to C\big( [0,T] \times X \big)$ on a test function $v$ as:
\begin{equation}
\Lf v = \frac{\partial v}{\partial t} + \sum_{i = 1}^n \frac{\partial v}{\partial x_i} [f]_i(t,x),
\end{equation}
and its adjoint operator $\Lf': C\big( [0,T] \times X \big)' \to C^1\big([0,T] \times X \big)'$ by the adjoint relation:
\begin{equation}
\label{eq:adjoint_Lf}
\langle \Lf'\mu, v \rangle = \langle \mu, \Lf v \rangle = \int_{[0,T] \times X} \Lf v(t,x) d\mu(t,x)
\end{equation}
for all $\mu \in \M\big([0,T] \times X\big)$ and $v \in C^1\big([0,T] \times X \big)$. Define the linear operator $\Lg:C^1\big([0,T] \times X \big) \to C\big( [0,T]\times X\big)^m$ as:
\begin{equation}
[ \Lg v ]_j = \sum_{i = 1}^n \frac{\partial v}{\partial x_i} [g]_{ij}(t,x),
\end{equation}
for each $j \in \{1,\ldots,m\}$ and define its adjoint operator $\Lg': \left(C\big( [0,T] \times X \big)^m\right)' \to C^1\big([0,T] \times X \big)'$ according to its adjoint relation as in Equation \eqref{eq:adjoint_Lf}. Note that $\Lf v(t,x) + (\Lg v(t,x))u(t,x)$ is the time-derivative $\dot{v}$ of a function $v$.

\begin{figure}
	\centering
	\includegraphics[width=0.85\columnwidth]{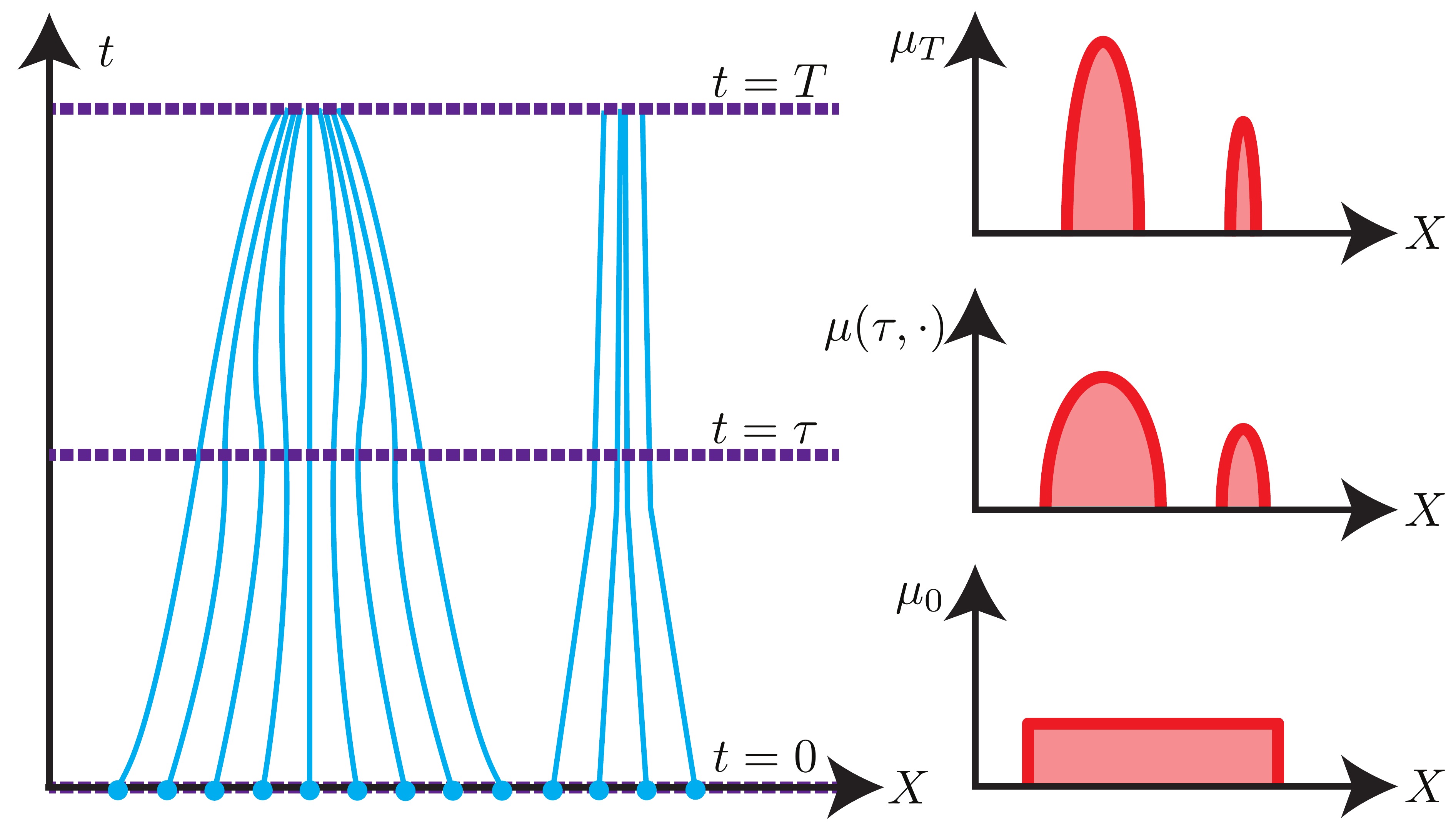}
	\vspace*{-5pt}
	\caption{An illustration (left) of trajectories (blue) transforming according to Equation \eqref{eq:control_affine} and their corresponding occupation measures (red) at times $0, \tau,$ and $T$ (purple) transforming according to Equation \eqref{eq:liouville}. \label{fig:occmeasures}}
	\vspace*{-22pt}
\end{figure}


Given a test function, $v \in C^1\left([0,T] \times X \right)$, and an initial condition, $x(0)\in X$, it follows that:
\begin{equation}
	\label{eq:test_func_evolve}
	v(T,x(T))  = v(0,x(0)) + \int_0^T \dot{v}\left(t,x(t|x_0)\right)dt.
\end{equation}
The traditional approach to designing controllers that stabilize the system imposes Lyapunov conditions on the test functions. However, simultaneously searching for a controller and Lyapunov function results in a nonconvex optimization problem \cite{Prajna04a}. Instead we examine conditions on the space of measures--the dual to the space of functions--in order to arrive at a convex formulation.

For a fixed control policy $u \in L^1\left( [0,T]\times X, U\right)$ and an initial condition $x_0 \in \R^n$, let  $x(\cdot | x_0) :[0,T] \to X$ be a solution to Equation \eqref{eq:control_affine}. Define the \emph{occupation measure} as:
\begin{equation}
	\mu(A\times B | x_0 ) = \int_0^T I_{A \times B}\left(t,x(t | x_0)) \right) dt,
\end{equation}
for all subsets $A \times B$ in the Borel $\sigma$-algebra of $[0,T] \times X$, where $I_{A\times B}(\cdot)$ denotes the indicator function on a set $A\times B$. This computes the amount of time the graph of the solution, $(t,x(t|x_0))$, spends in $A \times B$. Equation \eqref{eq:test_func_evolve} then becomes:
\begin{equation}
	\label{eq:test_func_evolve_occ_measure}
	\begin{aligned}
		v(T,x(T)) = v(0,&x(0)) + \int_{[0,T]\times X} \Big( \Lf v(t,x) + \\& + \Lg v(t,x)u(t,x) \Big) d\mu(t,x | x_0).
	\end{aligned}
\end{equation}
When the initial state is not a single point, but is a distribution modeled by an \emph{initial measure}, $\mu_0 \in \M(X)$, we define the \emph{average occupation measure}, $\mu \in \M\left([0,T] \times X\right)$ by:
\begin{equation}
	\mu(A\times B) = \int_X \mu(A \times B | x_0 ) d\mu_0(x_0),
\end{equation} 
and the \emph{final measure}, $\mu_T \in \M\left(X_T \right)$ by:
\begin{equation}
	\mu_T(B) = \int_X I_B( x(T|x_0) ) d\mu_0(x_0).
\end{equation}
Integrating with respect to $\mu_0$ and introducing the initial, average occupation, and final measures, Equation \eqref{eq:test_func_evolve_occ_measure} becomes:
\begin{multline}
	\label{eq:deriv_liouville}
	\int\limits_{X_T} v(T,x) d\mu_T(x) = \int\limits_X v(0,x) d\mu_0(x) + \\ + \int_{[0,T]\times X} \Big( \Lf v(t,x) + \Lg v(t,x)u(t,x) \Big) d\mu(t,x).
\end{multline}
It is useful to think of the measures $\mu_0$, $\mu$ and $\mu_T$ as \emph{unnormalized} probability distributions. The support of $\mu_0$ models the set of initial conditions, the support of $\mu$ models the flow of trajectories, and the support of $\mu_T$ models the set of states at time $T$.

Next, we subsume $u(t,x)$ into a \emph{signed} measure $\sigma^+ - \sigma^-$ defined by \emph{nonnegative} measures\footnote{Note that we can always decompose a signed measure into unsigned measures as a result of the Jordan Decomposition Theorem \cite[Theorem 3.4]{Folland99}.}  $\sigma^+,\sigma^- \in \left(\M\left([0,T]\times X \right)\right)^m$ such that:
\begin{equation}
	\label{eq:measure_control_synthesis}
	 \int_{A\times B}\hspace*{-.3cm} u_j(t,x) d\mu(t,x) = \int_{A\times B}\hspace*{-.35cm}d[\sigma^+]_j(t,x) - \hspace*{-.05cm}\int_{A\times B}\hspace*{-.35cm}d[\sigma^-]_j(t,x) 
\end{equation}
for all subsets $A\times B$ in the Borel $\sigma$-algebra of $[0,T] \times X$ and for each $j \in \{1,\ldots,m\}$. This key step allows us to pose an infinite dimensional LP over measures without explicitly parameterizing a control law while allowing us to ``back out'' a control law using Equation \eqref{eq:measure_control_synthesis}. Equation \eqref{eq:deriv_liouville} becomes:
\begin{equation}
\label{eq:testfnc_liouville}
	\langle \mu_T,v(T,\cdot) \rangle = \langle \mu_0, v(0,\cdot) \rangle + \langle \mu, \Lf v \rangle + \langle \sigma^+ \hspace*{-.03cm}- \sigma^-, \Lg v \rangle
\end{equation}
for all test functions $v \in C^1([0,T]\times X)$. Notice that this substitution renders Equation \eqref{eq:testfnc_liouville} linear in its measure components. Let $\delta_t$ denote the Dirac measure at a point $t$ and let $\otimes$ denote the product of measures. Since Equation \eqref{eq:testfnc_liouville} must hold for all test functions, we obtain a linear operator equation:
\begin{equation}
\label{eq:liouville}
\Lf'\mu + \Lg'\sigma^+ - \Lg'\sigma^- = \delta_T \otimes \mu_T - \delta_0 \otimes \mu_0,
\end{equation}
called Liouville's Equation, which is a classical result in statistical physics that describes the evolution of a density of particles within a fluid \cite{Arnold89}. Figure \ref{fig:occmeasures} illustrates the evolution of densities according to Liouville's Equation. This equation is satisfied by families of admissible trajectories starting from the initial distribution $\mu_0$. The converse statement is true for control affine systems with a convex admissible control set, as we have assumed. We refer the reader to \cite[Appendix A]{Henrion12} for an extended discussion of Liouville's Equation.  

\subsection{BRS via an Infinite Dimensional LP}\label{subsec:ROA_LP}

The goal of this section is to use Liouville's Equation to formulate an infinite dimensional LP, $P$, that maximizes the size of the BRS, modeled by $\supp(\mu_0)$, for a given target set, modeled by $\supp(\mu_T)$, where $\supp(\mu)$ denotes the support of a measure $\mu$. Slack measures (denoted with ``hats'') are used to impose the constraints $\lambda \geq \mu_0$ and $\mu \geq [\sigma^+]_j + [\sigma^-]_j$ for each $j \in \{1,\dots,m\}$, 
where $\lambda$ is the Lebesgue measure. The former constraint ensures that the optimal value of $P$ is the Lebesgue measure of the largest achievable BRS (see Theorem \ref{thm:optP}). The latter constraint ensures that we are able to extract a bounded control law by applying Equation \eqref{eq:measure_control_synthesis} (see Theorem \ref{thm:lp control}). Define $P$ as:
\begin{flalign} 
			& \text{sup} & & \mu_0(X) && \hspace*{-1.75cm}(P) \nonumber \\
			& \text{s.t.} & & \Lf'\mu + \Lg'(\sigma^+-\sigma^-) =
                        \delta_T \otimes \mu_T - \delta_0 \otimes
                        \mu_0, \nonumber \\
			& & & [\sigma^+]_j + [\sigma^-]_j + [\hat{\sigma}]_j = \mu &&\hspace*{-1.75cm}\forall j \in \{1,\ldots,m\}, \nonumber \\
			& & & \mu_0 + \hat{\mu}_0 = \lambda,&& \nonumber \\
            & & & [\sigma^+]_j, [\sigma^-]_j, [\hat{\sigma}]_j \geq 0 &&\hspace*{-1.75cm}\forall j \in \{1,\ldots,m\}, \nonumber\\
			& & & \mu, \mu_0, \mu_T, \hat{\mu}_0 \geq 0, \nonumber
\end{flalign}
where the given data are $f,g,X,X_T$ and the supremum is taken
over a tuple of measures $(\sigma^+,\sigma^-,\hat
\sigma,\mu,\mu_0,\hat{\mu}_0,\mu_T) \in \left(\M\big([0,T] \times X\big)\right)^m\times \left(\M\big([0,T] \times X\big)\right)^m\times \left(\M\big([0,T] \times X\big)\right)^m\times\M\big([0,T] \times X\big) \times \M(X) \times \M(X) \times \M(X_T)$.  Given measures that achieve the supremum, the control law that maximizes the size of the BRS is then constructed by finding the $u\in L^1([0,T] \times X, U )$ whose components each satisfy Equation \eqref{eq:measure_control_synthesis} for all subsets in the Borel $\sigma$-algebra of $[0,T] \times X$. Before proving that this two-step procedure computes $u^*\in L^1([0,T] \times X, U )$ as in Equation \eqref{eq:control_synthesis_prob}, define the dual program to $P$ denoted $D$ as:
\begin{flalign} 
		& \text{inf} & & \int_X w(x) d\lambda(x) && (D) \nonumber \\
		& \text{s.t.} & & \Lf v +\Sigma_{i=1}^m [p]_i \leq 0, \nonumber\\
		& & & [p]_i \geq 0, \quad [p]_i \geq |[\Lg v]_i|  && \forall i = \{1,\dots,m\}, \nonumber \\
        & & & w \geq 0, && \nonumber \\
		& & & w(x) \geq v(0,x) + 1 &&\forall x \in X, \nonumber \\
		& & & v(T,x) \geq 0 &&\forall x \in X_T \nonumber
\end{flalign}
where the given data are $f,g,X,X_T$ and the infimum is taken over $(v,w,p) \in C^1\left([0,T] \times X \right) \times C(X) \times \left(C([0,T] \times X)\right)^m $. The dual allows us to obtain approximations of the BRS $\cal X^*$ (see Theorem \ref{thm:w sequence}).

\begin{theorem}
There is no duality gap between $P$ and $D$.
\end{theorem}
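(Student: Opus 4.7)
The plan is to cast $(P)$ and $(D)$ as a standard primal--dual pair of infinite-dimensional conic linear programs on topological vector spaces placed in separating duality, and then invoke a general no-duality-gap theorem that applies once the primal feasible set is shown to be weak-$*$ compact. The natural spaces are products of $\M(K)$ (for $K = [0,T]\times X$, $X$, or $X_T$), each identified with the topological dual of the corresponding space of continuous functions under the supremum norm, with the nonnegative orthant being the cone of nonnegative Radon measures. First I would pair each primal equality constraint with the dual variable it is indexed by (the Liouville constraint with $v$, the $\sigma$-slackness with the components of $p$, and the $\hat\mu_0$-slackness with $w$), collect terms by primal variable, and read off $(D)$ as the Lagrangian dual. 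Along the way this produces weak duality: for any primal-feasible tuple and any dual-feasible $(v,w,p)$, the primal objective $\mu_0(X)$ is bounded above by $\int_X w\,d\lambda$.

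The next step is to verify the three hypotheses of a standard strong-duality theorem for measure LPs (e.g.\ Anderson--Nash Theorem 3.10, or the variant used in the occupation-measure proof of \cite{Henrion12}): nonemptiness, total-variation boundedness, and weak-$*$ closedness of the feasible set. Nonemptiness is immediate, taking all measures zero except $\hat\mu_0 = \lambda$. Boundedness comes from two well-chosen test functions in Liouville's equation: $v\equiv 1$ gives $\mu_T(X_T)=\mu_0(X)\leq\lambda(X)$, and $v(t,x)=t$ gives $\mu([0,T]\times X) = T\,\mu_T(X_T) \leq T\lambda(X)$, after which the slackness relations $[\sigma^+]_j+[\sigma^-]_j+[\hat\sigma]_j=\mu$ and $\mu_0+\hat\mu_0=\lambda$ bound the remaining measures. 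Weak-$*$ closedness of each equality constraint follows because every constraint is defined by integration against a fixed continuous (or $C^1$) test function, hence by a weak-$*$ continuous linear functional. Combining these with Banach--Alaoglu on each compact-support measure space yields weak-$*$ compactness of the full feasible set.

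Once weak-$*$ compactness is in hand, the primal objective $\mu_0(X)=\langle\mu_0,1\rangle$ is weak-$*$ continuous, so the supremum in $(P)$ is attained, and the no-duality-gap conclusion follows by the standard separation argument in the chosen duality: the image cone of attainable right-hand sides is convex and, by the compactness above, weak-$*$ closed, so by Hahn--Banach the two optimal values agree. The main obstacle I expect is precisely this weak-$*$ closedness of the image cone---this is the nontrivial hypothesis of Anderson--Nash-type theorems and the step where such proofs most often break if the ambient spaces and topologies are chosen carelessly. The two test-function identities derived from Liouville's equation are what make the compactness argument, and therefore the entire duality argument, go through.
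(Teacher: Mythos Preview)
Your proposal is correct and follows essentially the same approach as the paper, which simply omits the proof and refers to \cite[Theorem 3.10]{Anderson87}; you have filled in precisely the verification of that theorem's hypotheses (nonempty feasible set, total-variation boundedness via the test functions $v\equiv 1$ and $v=t$ in Liouville's equation, and weak-$*$ closedness). There is nothing to correct.
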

\begin{proof}
Due to space limitations, we omit the proof, which follows from \cite[Theorem 3.10]{Anderson87}.
\end{proof}
\begin{theorem}
	\label{thm:optP}
The optimal value of $P$ is equal to $\lambda(\cal X^*)$, the Lebesgue measure of the BRS of the controller defined by Equation \eqref{eq:measure_control_synthesis}. 
\end{theorem}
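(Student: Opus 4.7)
The plan is to prove the two inequalities $\text{val}(P) \geq \lambda(\mathcal{X}^*)$ and $\text{val}(P) \leq \lambda(\mathcal{X}^*)$ separately; by Remark~\ref{rem:assumptions} we may assume $U = [-1,1]^m$.

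For achievability, the strategy is to start from an optimizer $u^*$ of Equation \eqref{eq:control_synthesis_prob} and build a feasible tuple for $P$ attaining $\mu_0(X) = \lambda(\mathcal{X}^*)$. Set $\mu_0$ to be Lebesgue measure restricted to $\mathcal{X}^*$ and $\hat{\mu}_0 := \lambda - \mu_0$; both are nonnegative and their sum is $\lambda$. Let $\mu$ and $\mu_T$ be the average occupation and final measures produced by flowing $\mu_0$ along the closed-loop dynamics driven by $u^*$; these are well-defined and supported on $[0,T]\times X$ and $X_T$ respectively, because every trajectory starting in $\mathcal{X}^*$ by definition remains in $X$ and terminates in $X_T$. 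Split each component $[u^*]_j\, d\mu$ via its Jordan decomposition into $d[\sigma^+]_j - d[\sigma^-]_j$, so that \eqref{eq:measure_control_synthesis} holds by construction and $[\sigma^{\pm}]_j \geq 0$. Since $|[u^*]_j| \leq 1$, one has $[\sigma^+]_j + [\sigma^-]_j \leq \mu$, so the slack $[\hat{\sigma}]_j := \mu - [\sigma^+]_j - [\sigma^-]_j$ is nonnegative. Liouville's equation \eqref{eq:liouville} then follows by applying \eqref{eq:deriv_liouville} to an arbitrary test function and substituting through \eqref{eq:measure_control_synthesis}. All constraints of $P$ are met and the objective equals $\lambda(\mathcal{X}^*)$.

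For the reverse direction, the plan is to take any feasible tuple and extract a control $u \in L^1([0,T]\times X, U)$ with $\mathcal{X}(u) \supseteq \supp(\mu_0)$ up to a $\mu_0$-null set. The domination constraint $[\sigma^+]_j + [\sigma^-]_j + [\hat{\sigma}]_j = \mu$ makes $[\sigma^{\pm}]_j$ absolutely continuous with respect to $\mu$, so the Radon--Nikodym theorem supplies measurable densities $\phi_j^{\pm} \in [0,1]$ satisfying $\phi_j^+ + \phi_j^- \leq 1$ $\mu$-a.e. Define $[u]_j := \phi_j^+ - \phi_j^-$, which takes values in $U$. By construction $u$ satisfies \eqref{eq:measure_control_synthesis}, so Liouville's equation for the measure tuple collapses to Liouville's equation for the closed-loop dynamics $\dot x = f(t,x) + g(t,x)u(t,x)$ with initial distribution $\mu_0$ and terminal distribution $\mu_T$. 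Invoking the converse of Liouville's equation for control-affine systems with convex admissible control set \cite[Appendix~A]{Henrion12}, the measures $\mu$ and $\mu_T$ must then arise from genuine closed-loop trajectories starting from $\mu_0$-a.e.\ initial condition in $X$ and reaching $X_T$ at time $T$. Thus $\mu_0$ is concentrated on $\mathcal{X}(u)$, and combining with $\mu_0 \leq \lambda$ and the optimality of $u^*$ yields $\mu_0(X) \leq \lambda(\mathcal{X}(u)) \leq \lambda(\mathcal{X}^*)$.

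The principal obstacle is the converse of Liouville's equation: that every measure-theoretic solution of \eqref{eq:liouville} actually comes from a family of closed-loop trajectories of the controlled ODE, so that $\supp(\mu_0)$ lies in $\mathcal{X}(u)$. This step is where the convexity of the admissible control set (a box, here) is essential and where a measurable selection / Young-measure argument is required; fortunately \cite[Appendix~A]{Henrion12} already establishes precisely this statement, after which the proof is an assembly of standard measure-theoretic facts (Jordan and Radon--Nikodym decompositions, and monotonicity of nonnegative measures on Borel sets).
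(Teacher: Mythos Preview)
Your argument is essentially correct, but it follows a genuinely different route from the paper's own proof. The paper does \emph{not} work on the primal at all: it first invokes the absence of a duality gap between $P$ and $D$, and then shows that $D$ coincides with the dual LP of \cite[Equation~(15)]{Henrion12} by verifying that the three constraints involving $p$ are equivalent to the single constraint $\Lf v(t,x)+\Lg v(t,x)\,u\leq 0$ for all $(t,x,u)\in[0,T]\times X\times U$. The conclusion $\text{val}(P)=\lambda(\mathcal{X}^*)$ then follows immediately from \cite[Theorem~1]{Henrion12}. In other words, the paper's proof is a two-line reduction to an existing result, with the only work being an elementary sup-over-a-box computation.

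Your approach, by contrast, stays entirely on the primal side: you build a feasible tuple from $u^*$ for the lower bound, and for the upper bound you extract a feedback via Radon--Nikodym and appeal to the superposition principle. This avoids the no-duality-gap theorem altogether and is more self-contained, at the cost of invoking the heavier converse-of-Liouville machinery (your upper-bound argument is essentially a preview of the paper's Theorem~\ref{thm:lp control}). One caution: the result you cite from \cite[Appendix~A]{Henrion12} yields trajectories of the differential inclusion $\dot x\in f+gU$, not a priori of the specific closed-loop field $f+gu$ with your extracted $u$; to land exactly on $\mathcal{X}(u)$ you either need Ambrosio's superposition principle for the fixed (merely measurable) vector field $f+gu$, or a measurable-selection step identifying the open-loop and feedback BRS. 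The paper's proof shares this same implicit identification when it quotes \cite[Theorem~1]{Henrion12}, so this is not a defect unique to your route, but it is worth flagging as the one nontrivial ingredient beyond bookkeeping.
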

\begin{proof}
Since there is no duality gap between $P$ and $D$, it is sufficient to show that the optimal value of $D$ is equal to $\lambda(\cal X^*)$. We do this by demonstrating that $D$ is equivalent to the dual LP defined in Equation (15) in \cite{Henrion12}, whose optimal value is equal to $\lambda(\cal X^*)$ \cite[Theorem 1]{Henrion12}. Note that the constraints $w(x) \geq v(0,x) + 1$, $v(T,x) \geq 0$, and $w(x) \geq 0$ appear in both optimization problems. Since the objectives are also identical, it suffices to show that the first three constraints in $D$ are equivalent to the constraint $\Lf v(t,x) + (\Lg v(t,x)) u \leq 0 \ \forall (t,x,u) \in [0,T] \times X \times U$. Suppose that the former set of the three constraints holds. Given $u \in U$, note that $\Lf v + (\Lg v)u \leq \Lf v + \Sigma_{i=1}^m |[\Lg v]_i u_i|$. Hence, since $[p]_i \geq |[\Lg v]_i|$, $\Lf v + \Sigma_{i=1}^m [p]_i \leq 0$, and $|u_i| \leq 1$ (see Remark \ref{rem:assumptions}), we have the desired result.

To prove the converse, we illustrate the existence of $[p]_i(t,x) \geq 0$ that satisfies the three constraints appearing in $D$. Let $[p]_i(t,x) = |\Lg v(t,x)]_i|$, which is a non-negative continuous function. Clearly, $p_i \geq [\Lg v]_i$ and $[p]_i \geq -[\Lg v]_i$. To finish the proof, note:
\begin{align*}
\Lf v(t,x) + \Sigma_{i=1}^m [p]_i(t,x) 
 & = \underset{u \in U}{\text{sup}} \Lf v(t,x) + \Lg v(t,x)) u \leq 0 
\end{align*}
\end{proof}

The solution to $P$ can be used in order to construct the control law that maximizes the BRS:
\begin{theorem} \label{thm:lp control}
	There exists a control law, $\tilde{u}\in L^1([0,T] \times X, U )$, that satisfies Equation \eqref{eq:measure_control_synthesis} when substituting in the vector of measures that achieves the supremum of $P$, $(\sigma^{+*},\sigma^{-*},\hat{\sigma}^{*},\mu^*,\mu^*_0,\hat{\mu}^*_0,\mu^*_T)$, and is the control law that maximizes the size of the BRS, i.e. $\lambda({\cal X}(\tilde{u})) \geq \lambda( {\cal X}(u)), \forall u \in L^1([0,T] \times X, U )$. Moreover, any two control laws constructed by applying Equation \eqref{eq:measure_control_synthesis} to the vector of measures that achieves the supremum of $P$ are equal $\mu^*$-almost everywhere. 
\end{theorem}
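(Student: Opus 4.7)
The plan is to construct $\tilde u$ directly from the optimal measures via the Radon--Nikodym theorem. The slack constraint $[\sigma^+]_j + [\sigma^-]_j + [\hat\sigma]_j = \mu$ together with $[\hat\sigma]_j \geq 0$ forces $[\sigma^{+*}]_j, [\sigma^{-*}]_j \leq \mu^*$ as nonnegative measures, hence both are absolutely continuous with respect to $\mu^*$. Let $\rho_j^{\pm}:[0,T]\times X\to[0,\infty)$ denote the resulting densities and set
\begin{equation*}
\tilde u_j(t,x) := \rho_j^+(t,x) - \rho_j^-(t,x)
\end{equation*}
$\mu^*$-almost everywhere, extending $\tilde u$ measurably (say, by zero) to all of $[0,T]\times X$. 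The same slack constraint gives $\rho_j^+ + \rho_j^- \leq 1$ $\mu^*$-a.e., so $|\tilde u_j| \leq 1$ $\mu^*$-a.e.; after modifying on a $\mu^*$-null set, $\tilde u$ takes values in $U$ and therefore lies in $L^1([0,T]\times X,U)$. Equation \eqref{eq:measure_control_synthesis} then holds by construction because, for every Borel $A\times B$,
\begin{equation*}
\int_{A\times B}\tilde u_j\, d\mu^* = \int_{A\times B}\rho_j^+\, d\mu^* - \int_{A\times B}\rho_j^-\, d\mu^* = [\sigma^{+*}]_j(A\times B) - [\sigma^{-*}]_j(A\times B).
\end{equation*}

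To show that $\tilde u$ maximizes the size of the BRS, I would substitute $[\sigma^{+*}]_j - [\sigma^{-*}]_j = \tilde u_j\mu^*$ into the Liouville constraint of $P$ to obtain
\begin{equation*}
\Lf'\mu^* + \Lg'(\tilde u\,\mu^*) = \delta_T\otimes\mu_T^* - \delta_0\otimes\mu_0^*,
\end{equation*}
and then appeal to the converse of Liouville's equation cited immediately after \eqref{eq:liouville} and detailed in \cite[Appendix A]{Henrion12}. Under the control-affine form of \eqref{eq:control_affine} and the convexity of $U$, this identity implies that $\mu^*$ and $\mu_T^*$ are realized, respectively, as the average occupation and final measures of the closed-loop flow driven by $\tilde u$ from the initial distribution $\mu_0^*$. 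In particular, $\mu_0^*$-almost every $x_0\in X$ belongs to $\mathcal{X}(\tilde u)$, and since the slack $\hat\mu_0 \geq 0$ forces $\mu_0^*\leq\lambda$, it follows that $\lambda(\mathcal{X}(\tilde u)) \geq \mu_0^*(\mathcal{X}(\tilde u)) = \mu_0^*(X)$. Theorem \ref{thm:optP} gives $\mu_0^*(X) = \lambda(\mathcal{X}^*) \geq \lambda(\mathcal{X}(u))$ for every $u \in L^1([0,T]\times X,U)$, so $\tilde u$ achieves the supremum in \eqref{eq:control_synthesis_prob}.

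The uniqueness statement is a short consequence of the construction: if two controllers $\tilde u,\tilde u'$ both satisfy \eqref{eq:measure_control_synthesis} against the same optimal measures, then $\int_E(\tilde u_j - \tilde u_j')\,d\mu^* = 0$ for every Borel $E\subseteq [0,T]\times X$, and a standard $\pi$-system argument forces $\tilde u_j = \tilde u_j'$ $\mu^*$-a.e. The real obstacle here is the converse-Liouville step: a tuple $(\mu^*,\mu_T^*,\sigma^{\pm*})$ satisfying \eqref{eq:liouville} need not a priori arise from genuine Carath\'eodory trajectories of \eqref{eq:control_affine}, and it is precisely the control-affine structure together with the convexity of $U$, encoded by the ability to realize $\sigma^{+*}-\sigma^{-*}$ as $\tilde u\mu^*$ with $\tilde u\in U$, that makes this correspondence available. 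In a full write-up I would invoke \cite{Henrion12} for the underlying disintegration/selection argument rather than reproduce it.
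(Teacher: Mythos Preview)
Your proposal is correct and follows essentially the same route as the paper: both arguments use the slack constraint $[\sigma^+]_j + [\sigma^-]_j + [\hat\sigma]_j = \mu$ to obtain absolute continuity, invoke Radon--Nikodym to extract $\tilde u$, and then appeal to the converse of Liouville's equation (cited from \cite{Henrion12}) to conclude that the optimal measures correspond to admissible trajectories. Your write-up is in fact more explicit than the paper's on two points the paper glosses over---why the resulting density satisfies $|\tilde u_j|\le 1$ (hence $\tilde u$ takes values in $U$), and the chain of inequalities linking $\mu_0^*(X)$ to $\lambda(\mathcal{X}(\tilde u))$---but the underlying ideas are identical.
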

\begin{proof}
Note that $[\sigma^{+*}]_j,[\sigma^{-*}]_j,$ and $\mu^*$ are $\sigma$-finite for all $j \in \{1,\ldots,m\}$ since they are Radon measures defined over a compact set. Define $[\sigma^*]_j = [\sigma^{+*}]_j - [\sigma^{-*}]_j$ for each $j \in \{1,\ldots,m\}$ and notice that each $[\sigma^*]_j$ is also $\sigma$-finite. Since $[\sigma^{+*}]_j + [\sigma^{-*}]_j + [\hat{\sigma}^*]_j = \mu^*$ and $[\sigma^{+*}]_j, [\sigma^{-*}]_j, [\hat{\sigma}^*]_j \geq 0$, $\sigma^*$ is absolutely continuous with respect to $\mu^*$. Therefore as a result of the Radon--Nikodym Theorem \cite[Theorem 3.8]{Folland99}, there exists a $\tilde{u}\in L^1([0,T] \times X, U )$, which is unique $\mu^*$-almost everywhere, that satisfies Equation \eqref{eq:measure_control_synthesis} when plugging in the vector of measures that achieves the supremum of $P$. To see that $\lambda({\cal X}(\tilde{u})) \geq \lambda( {\cal X}(u)), \forall u \in L^1([0,T] \times X, U )$, notice that by construction  $\mu^*_T,\mu^*_0,\mu^*$, and $\tilde{u}$ satisfy Equation \eqref{eq:deriv_liouville} for all test functions $v \in C^1([0,T]\times X)$. Since $\mu^*_0$ describes the maximum BRS and Equation \eqref{eq:deriv_liouville} describes all admissible trajectories, we have our result. 
\end{proof}

Next, we note that the $w$-component to a feasible point of $D$ is an outer approximation to ${\cal X}^*$. 
This follows from our proof of Theorem \ref{thm:optP} and Lemma 2 and Theorem 3 in \cite{Henrion12}.

\begin{theorem} \label{thm:w sequence}
$\cal X^*$ is a subset of $\{x \ | \ w(x) \geq 1\}$, for any feasible $w$ of the $D$. Furthermore, there is a sequence of feasible solutions to $D$ such that the $w$-component converges from above to $I_{\cal X^*}$ in the $L^1$ norm and almost uniformly. 
\end{theorem}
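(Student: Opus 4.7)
The plan is to handle the two assertions separately, exploiting the equivalence between $D$ and the dual LP of Henrion--Korda established in the proof of Theorem \ref{thm:optP}. The first assertion is a pointwise argument along optimal trajectories; the second is a standard functional-analytic upgrade of the optimal-value equality to convergence of a minimizing sequence.

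For the first assertion, I fix $x_0 \in {\cal X}^*$ and an optimal control $u^*$ whose trajectory $x(\cdot)$ starts at $x_0$, stays in $X$, and satisfies $x(T) \in X_T$. For any feasible $(v,w,p)$ of $D$, applying the fundamental theorem of calculus in the form of Equation \eqref{eq:test_func_evolve} gives
\begin{equation*}
v(T,x(T)) - v(0,x_0) = \int_0^T \bigl[\Lf v(t,x(t)) + \Lg v(t,x(t))\,u^*(t,x(t))\bigr]\,dt.
\end{equation*}
By the constraints of $D$ together with $|u^*_i|\leq 1$ (Remark \ref{rem:assumptions}), as already shown in the proof of Theorem \ref{thm:optP}, the bracketed integrand is pointwise nonpositive. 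Hence $v(0,x_0) \geq v(T,x(T)) \geq 0$ since $x(T)\in X_T$ and $v(T,\cdot)\geq 0$ on $X_T$. Combining with the constraint $w(x_0) \geq v(0,x_0) + 1$ yields $w(x_0) \geq 1$, proving ${\cal X}^* \subseteq \{x \mid w(x) \geq 1\}$.

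For the second assertion, the previous step, together with the feasibility constraint $w \geq 0$, shows $w \geq I_{{\cal X}^*}$ pointwise for every feasible $w$. By Theorem \ref{thm:optP} and the absence of a duality gap, the infimum of $D$ equals $\lambda({\cal X}^*) = \int_X I_{{\cal X}^*}\,d\lambda$, so a minimizing sequence $(v_k,w_k,p_k)$ exists with $\int_X w_k\,d\lambda \to \int_X I_{{\cal X}^*}\,d\lambda$. Since $w_k - I_{{\cal X}^*} \geq 0$ and its integral tends to zero, $w_k \to I_{{\cal X}^*}$ in $L^1(\lambda)$ from above.

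To upgrade to almost uniform convergence, I use that $L^1$ convergence on the finite-measure space $(X,\lambda)$ implies convergence in $\lambda$-measure, from which one extracts a subsequence (still denoted $w_k$) converging $\lambda$-a.e.\ to $I_{{\cal X}^*}$; Egorov's theorem, applicable because $\lambda(X) < \infty$ under Assumption \ref{assume:ball}, then promotes a.e.\ convergence to almost uniform convergence. The main obstacle here is purely bookkeeping: the subsequence used for almost uniform convergence must still consist of feasible $D$-solutions, which is automatic since we restrict to a subsequence of the minimizing sequence. The construction of the minimizing sequence itself, and the matching of our constraints to those of Equation (15) in \cite{Henrion12}, are delegated to Lemma 2 and Theorem 3 of \cite{Henrion12}, as indicated in the excerpt.
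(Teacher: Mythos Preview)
Your proof is correct and follows essentially the same approach the paper indicates by citation: the trajectory-integration argument for the first assertion is precisely the content of Lemma 2 in \cite{Henrion12}, and the minimizing-sequence plus Egorov upgrade for the second is Theorem 3 there, both transferred to $D$ via the equivalence established in the proof of Theorem \ref{thm:optP}. Your final sentence delegating the construction to \cite{Henrion12} is in fact unnecessary, since you have already supplied the full argument directly.
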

\section{Numerical Implementation}
\label{sec:implementation}

The infinite dimensional problems $P$ and $D$ are not directly amenable to computation. However, a sequence of finite dimensional approximations in terms of semidefinite programs (SDPs) can be obtained by characterizing measures in $P$ by their \emph{moments}, and restricting the space of functions in $D$ to polynomials. The solutions to each of the SDPs in this sequence can be used to construct controllers and outer approximations that converge to the solution of the infinite dimensional LP. A comprehensive introduction to such \emph{moment relaxations} can be found in \cite{Lasserre10}.

Measures on the set $[0,T] \times X$ are completely determined by their action (via integration) on a dense subset of the space $C^1([0,T] \times X)$ \cite{Folland99}. Since $[0,T] \times X$ is compact, the Stone-Weierstrass Theorem \cite[Theorem 4.45]{Folland99} allows us to choose the set of polynomials as this dense subset. Every polynomial on $\RR^n$, say $p \in \RR[x]$ with $x = (x_1,\ldots,x_n)$, can be expanded in the monomial basis via
$$p(x) = \sum_{\alpha\in\NN^n} p_\alpha x^\alpha,$$
where $\alpha = (\alpha_1,\ldots,\alpha_n)$ ranges over vectors of non-negative integers, $x^\alpha = x_1^{\alpha_1} \ldots x_n^{\alpha_n}$, and $\vect(p) = (p_\alpha)_{\alpha \in \NN^n} $ is the vector of coefficients of $p$.
By definition, the $p_\alpha$ are real and only finitely many are non-zero.
We define $\RR_k[x]$ to be those polynomials such that $p_\alpha$ is non-zero only for $\alpha \in \NN_k^n$. The degree of a polynomial, $\deg(p)$, is the smallest $k$ such that $p \in \RR_k[x]$.


The moments of a measure $\mu$ defined over a real $n$-dimensional space are given by:
\begin{equation}
y^\alpha_{\mu} = \int x^\alpha d\mu(x).
\end{equation} 
Integration of a polynomial with respect to a measure $\nu$
can be expressed as a linear functional of its coefficients:
\begin{equation}
\langle \mu,p \rangle = \int p(x) d\mu(x)  = \sum_{\alpha \in \N^n} p_\alpha y^\alpha_{\mu} = \vect(p)^T y_{\mu}.
\end{equation}
Integrating the square of a polynomial $p \in \RR_k[x]$, we obtain:
\begin{equation}
\int p(x)^2 d\mu(x) = \vect(p)^T M_k({y_\mu}) \vect(p),
\end{equation}
where $M_k(y_\mu)$ is the \emph{truncated moment matrix} defined by
\begin{equation}
  [M_k(y_\mu)]_{(\alpha,\beta)} = y^{\alpha+\beta}_\mu
\end{equation}
for $\alpha, \beta \in \NN_k^n$.  Note that for any positive measure $\mu$, the matrix $M_k(y_\mu)$ must be positive semidefinite.
Similarly, given  $h \in \R[x]$ with $(h_\gamma)_{\gamma \in \NN^n} = \vect(h)$ one has
\begin{equation}
\int p(x)^2 h(x)d\mu(x) = \vect(p)^T M_k(h,{y_\mu}) \vect(p),
\end{equation}
where $M_k(h,y)$ is a  \emph{localizing matrix} defined by
\begin{equation}
[M_k(h,y_\mu)]_{(\alpha,\beta)} = \sum_{\gamma \in \NN^n} h_\gamma y^{\alpha+\beta}_\mu
\end{equation}
for all $\alpha,\beta \in \NN_k^n$.  
The localizing and moment matrices are symmetric and linear in the moments $y$.

\subsection{Approximating Problems}
\label{sec:approxprob}

Finite dimensional SDPs approximating $P$ can be obtained by replacing constraints on measures with \emph{constraints on moments}. All of the equality constraints of $P$ can be expressed as an infinite dimensional linear system of equations which the moments of the measures appearing in $P$ must satisfy. This linear system is obtained by restricting to polynomial test functions (which we note are sufficient given our discussion above): $v(t,x) = t^\alpha x^\beta$, $[p]_j(t,x) = t^\alpha x^\beta$, and $w(x) = x^\beta$, $\forall \alpha \in \mathbb{N}, \beta \in \mathbb{N}^n$. For example, the equality constraint corresponding to Liouville's Equation is obtained by examining:
\begin{align}
	0 &= \hspace*{-.4cm}\int\limits_{[0,T] \times X}\hspace*{-.3cm} \Lf (t^\alpha x^\beta) d\mu(t,x) + \hspace*{-.4cm}\int\limits_{[0,T] \times X}\hspace*{-.3cm} \Lg (t^\alpha x^\beta) d[\sigma^+]_j(t,x)  \nonumber \\ - \hspace*{-.4cm}\int\limits_{[0,T] \times X}\hspace*{-.3cm}&\Lg (t^\alpha x^\beta) d[\sigma^-]_j(t,x) - \hspace*{-.1cm}\int\limits_{X_T}T^\alpha x^\beta d\mu_T(x) 
	 + \hspace*{-.1cm}\int \limits_X x^\beta d\mu_0(x). \nonumber
\end{align}
A \emph{finite dimensional} linear system is obtained by truncating the degree of the polynomial test functions to $2k$. Let $\Gamma = \{\sigma^+,\sigma^-,\hat{\sigma},\mu,\mu_0,\hat{\mu}_0,\mu_T\}$, then let $\mathbf{y}_k = (y_{k,\gamma}) \subset \R$ be a vector of sequences of moments truncated to degree $2k$ for each $\gamma \in \Gamma$. The finite dimensional linear system is then represented by the  linear system:
\begin{equation}
A_k(\mathbf{y}_k) = b_k.
\end{equation}
Constraints on the support of the measures also need to be imposed (see \cite{Lasserre10} for details). 
Let the $k$-th relaxed SDP representation of $P$, denoted $P_k$, be defined as:
\begin{flalign} \label{eq:primal sdp}
		 & \text{sup} & &  y_{k, \mu_0}^0 && (P_k) \nonumber\\
		 & \text{s.t.} & &  A_k(\mathbf{y}_k) = b_k, && \nonumber\\
         & & & M_k(y_{k,\gamma}) \succeq 0 && \forall \gamma \in \Gamma, \nonumber\\
		 & & & M_{k_{X_i}}(h_{X_i},y_{k,\gamma}) \succeq 0 && \forall (i,\gamma) \in \{1, \ldots, n_X\} \times \Gamma\backslash \mu_T, \nonumber\\
	     & & & M_{k_{T_i}}(h_{T_i},y_{k,\mu_T}) \succeq 0 && \forall i \in \{1, \ldots, n_T\}, \nonumber\\
		 & & & M_{k-1}(h_{\tau},y_{k,\gamma}) \succeq 0 && \forall \gamma \in \Gamma \backslash \{\mu_0,\mu_T,\hat{\mu}_0 \}, \nonumber
\end{flalign}
where the given data are $f,g,X,X_T$ and the supremum is taken over the sequence of moments, $\mathbf{y}_k = (y_{k,\gamma})$, $h_{\tau} = t(T-t)$, $k_{X_i} = k - \lceil \text{deg}(h_{X_i})/2 \rceil$, $k_{T_i} = k - \lceil \text{deg}(h_{T_i})/2 \rceil$, and $\succeq 0$ denotes positive semi-definiteness. For each $k \in \N$, let $\mathbf{y}^*_k$ denote the optimizer of $P_k$, with components $y^*_{k,\gamma}$ where $\gamma \in \Gamma$ and let $p^*_k$ denote the supremum of $P_k$.


The dual of $P_k$ is a sums-of-squares (SOS) program denoted $D_k$ for each $k \in \N$, which is obtained by first restricting the optimization space in the $D$ to the polynomial functions with degree truncated to $2k$ and by then replacing the non-negativity constraint $D$ with a \emph{sums-of-squares} constraint  \cite{Parrilo00}. Define $Q_{2k}(h_{X_1},\ldots,h_{X_{n_X}}) \subset \mathbb{R}_{2k}[x]$ to be the set of polynomials $q \in \mathbb{R}_{2k}[x]$ (i.e. of total degree less than $2k$) expressible as
\begin{equation}
  q = s_0 + \sum_{i=1}^{n_X} s_i h_{X_i},
\end{equation}
for some polynomials $\{s_i\}_{i=0}^{n_X} \subset \RR_{2k}[x]$ that are sums of squares of other polynomials.  Every such polynomial is clearly non-negative on $X$. Define $Q_{2k} (h_{\tau},h_{X_1},\ldots,h_{X_{n_X}}) \subset \mathbb{R}_{2k}[t,x]$ and $Q_{2k} (h_{T_1},\ldots,h_{T_{n_T}}) \subset \mathbb{R}_{2k}[x]$, similarly. Employing this notation, the $k$-th relaxed SDP representation of $D$, denoted $D_k$, is defined as:
\begin{flalign}
		&\text{inf} & &  l^T \textrm{vec}(w) && (D_k) \nonumber\\
                &\text{s.t.} && -\Lf v - {\bf 1}^Tp \in  Q_{2k} (h_{\tau},h_{X_1},\ldots,h_{X_{n_X}}), &\nonumber\\
               & & & p - (\Lg v)^T  \in (Q_{2k} (h_{\tau},h_{X_1},\ldots,h_{X_{n_X}}))^m,&\nonumber\\
               & & & p + (\Lg v)^T  \in (Q_{2k} (h_{\tau},h_{X_1},\ldots,h_{X_{n_X}}))^m,&\nonumber\\
               & & & w \in  Q_{2k} (h_{X_1},\ldots,h_{X_{n_X}}),&\nonumber\\
               & & &  w-v(0,\cdot)- 1  \in  Q_{2k} (h_{X_1},\ldots,h_{X_{n_X}}),& \nonumber\\
			   & & & v(T,\cdot)  \in  Q_{2k} (h_{T_1},\ldots,h_{T_{n_T}}),&\nonumber
\end{flalign}
where the given data are $f,g,X,X_T$, the infimum is taken over the vector of polynomials $(v,w,p) \in \mathbb{R}_{2k}[t,x] \times \mathbb{R}_{2k}[x] \times (\mathbb{R}_{2k}[t,x])^m$, and $l$ is a vector of moments associated with the Lebesgue measure (i.e. $\int_X w\ d\lambda = l^T\textrm{vec}(w)$ for all $w \in \mathbb{R}_{2k}[x]$). For each $k \in \N$, let $d^*_k$ denote the infimum of $D_k$.

\begin{theorem}
	For each $k \in \N$, there is no duality gap between $P_k$ and $D_k$.
\end{theorem}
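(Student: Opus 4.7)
The plan is to invoke a standard strong duality theorem for semidefinite programs (see, e.g., Trnovsk\'a's theorem, which is the SDP analogue of \cite[Theorem 3.10]{Anderson87} used for $P$ and $D$). Such a theorem asserts that if one of the two problems has a finite optimal value and the other admits a strictly feasible (Slater) point, then the two optimal values coincide. Accordingly, the proof splits into two ingredients: (i) showing that $P_k$ has a finite supremum, and (ii) exhibiting a strictly feasible point for $D_k$.

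First, I would bound $P_k$ from above. Because every measure in $\Gamma$ is supported on a compact subset of $[0,T]\times X$ (or of $X$, $X_T$), the admissible moment sequences lie in a bounded subset of $\R$. Concretely, the constraint $\mu_0 + \hat{\mu}_0 = \lambda$ at the moment level forces $y^0_{k,\mu_0} \le \lambda(X) < \infty$ and bounds every moment of $\mu_0$. Testing Liouville's equation against $v(t,x)=1$ and $v(t,x)=t$ (both translate into linear identities among the truncated moments encoded in $A_k(\mathbf{y}_k)=b_k$) then bounds the moments of $\mu_T$ and $\mu$, and the constraint $[\sigma^+]_j+[\sigma^-]_j+[\hat{\sigma}]_j = \mu$ bounds the remaining moment sequences. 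Hence $p^*_k$ is finite. This also shows $P_k$ is feasible (e.g.\ taking all measures equal to zero except $\hat{\mu}_0 = \lambda$).

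Second, I would construct an explicit Slater point $(v,w,p)$ for $D_k$. My candidate is $v(t,x) = c(T-t)$, $p(t,x) = \epsilon \mathbf{1}$, and $w(x) = cT + 1 + \delta$, for constants $c,\epsilon,\delta > 0$ to be chosen. Note that $\Lf v = -c$ and $\Lg v = 0$, so the three dynamics-related constraints become $c - m\epsilon$, $\epsilon$, and $\epsilon$, each of which is a strictly positive constant for $\epsilon < c/m$. The target/terminal constraints become $w = cT + 1 + \delta$, $w - v(0,\cdot) - 1 = \delta$, and $v(T,\cdot) = 0$. Strictly positive constants lie in the interior of the truncated quadratic modules $Q_{2k}(\cdot)$ (by taking the leading SOS term $s_0$ to be a small strictly positive-definite SOS polynomial plus the constant), so every inclusion holds strictly and perturbing $(v,w,p)$ slightly preserves all constraints; $v(T,\cdot)=0$ is trivially SOS and is not the binding constraint for interiority because $X_T$ is compact and one may replace $v$ by $v + \eta(T-t)^2$ for small $\eta>0$ if needed.

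With a finite primal optimum and a strictly feasible dual point in hand, the cited SDP strong duality theorem delivers $p^*_k = d^*_k$. The main obstacle I foresee is verifying that the candidate dual point genuinely lies in the \emph{interior} of the SDP cone dual to the moment cone, rather than merely on its boundary; this is the step where one must either appeal carefully to the Archimedean structure induced by the bounding ball constraint $h_{X_i}(x) = C_X - \|x\|_2^2$ guaranteed by Remark \ref{rem:assumptions}, or invoke a more refined conic duality statement that only requires feasibility on one side plus closedness of the moment cone. Everything else reduces to algebraic verification of linear moment identities and SOS membership of strictly positive constants.
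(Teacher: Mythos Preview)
Your approach is essentially identical to the paper's: the paper also argues by (i) boundedness of the moment vectors in $P_k$ (deduced from $\mu_0+\hat\mu_0=\lambda$) and (ii) existence of an interior point for $D_k$, then invokes the same SDP strong duality theorem of Trnovsk\'a. The paper gives only a sketch, so your more explicit version is in fact a fleshing-out of exactly that argument.

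One small slip: your suggested repair for the terminal constraint does not work. With $v(t,x)=c(T-t)$ you have $v(T,\cdot)=0$, which lies on the boundary of $Q_{2k}(h_{T_1},\ldots,h_{T_{n_T}})$, and replacing $v$ by $v+\eta(T-t)^2$ still gives $v(T,\cdot)=0$. The easy fix is to add a small positive constant instead: set $v(t,x)=c(T-t)+\eta$ with $0<\eta<\delta$. Then $v(T,\cdot)=\eta>0$ is strictly interior to the quadratic module, $\Lf v=-c$ and $\Lg v=0$ are unchanged, and $w-v(0,\cdot)-1=\delta-\eta>0$ remains strictly positive. With this adjustment all six constraints are strictly positive constants, and (using the Archimedean ball constraint from Remark~\ref{rem:assumptions}) the point is genuinely interior to the dual feasible cone.
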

\begin{proof}
This follows from standard results from the theory of SDP duality and we do not include the full proof here. The proof involves noting that the moment vectors in SDP, $P_k$, are necessarily bounded because of the constraint $\mu_0 + \hat{\mu}_0 = \lambda$, and then arguing that the feasible set of the SDP, $D_k$, has an interior point. The existence of an interior point is sufficient to establish zero duality gap \cite[Theorem 5]{Trnovska05}. 
\end{proof}


Next, we construct a technique to extract a polynomial control law from the solution $\mathbf{y}_k$ of $P_k$. 
Given moment sequences truncated to degree $2k$, one can choose an approximate control law $u_k$  with components $[u_k]_j \in \RR_k[t,x]$ so that the truncated analogue of Equation \eqref{eq:measure_control_synthesis} is satisfied. That is, by requiring:
\begin{equation}
 \int\limits_{[0,T] \times X} \hspace*{-.3cm} t^{\alpha_0} x^\alpha [u_k]_j(t,x) \; d\mu(t,x) = \int\limits_{[0,T] \times X} \hspace*{-.3cm} t^{\alpha_0} x^\alpha d[\sigma^+ - \sigma^{-}]_j,
\end{equation}
for $(\alpha_0,\alpha)$ satisfying  $\sum_{i=0}^n \alpha_i \leq k$. When constructing a polynomial control law from the solution of $P_k$, these linear equations written with respect to the coefficients of $[u_k]_j$ are expressible in terms of $y^*_{k,\sigma^+}, y^*_{k,\sigma^-},$ and $y^*_{k,\mu}$. Direct calculation shows the linear system of equations is:
\begin{equation} \label{eq:controllers}
M_k(y_{k,\mu}^*) \textrm{vec}([u_k]_j) = y^*_{k,[\sigma^+]_j}-y^*_{k,[\sigma^-]_j}.
\end{equation}

\subsection{Convergence of Approximating Problems}
\label{sec:convergence}

Next, we prove the convergence properties of $P_k$ and $D_k$ and the corresponding controllers. We begin by proving that the polynomial $w$ approximates the indicator function of the set $\cal X^*$. As we increase $k$, this approximation gets tighter. The following theorem makes this statement precise. 
\begin{theorem} \label{thm:w convergence}
For each $k \in \N$, let $w_k \in \R_{2k}[x]$ denote the w-component of the solution to $D_k$, and let $\bar{w}_k(x) = \textrm{min}_{i \leq k} w_i(x)$. Then, $w_k$ converges from above to $I_{\cal X^*}$ in the $L^1$ norm, and $\bar{w}_k(x)$ converges from above to $I_{\cal X^*}$ in the $L^1$ norm and almost uniformly.
\end{theorem}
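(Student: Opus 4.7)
The plan is to reduce the theorem to two facts: (i) every feasible polynomial $w$ for $D_k$ is automatically feasible for the infinite dimensional program $D$, and (ii) the optimal values $d^*_k$ converge to $\lambda({\cal X}^*)$ as $k \to \infty$. Once these are in hand, the three convergence statements follow from standard measure-theoretic arguments.

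First I would establish pointwise domination. A tuple $(v,w,p)$ feasible for $D_k$ satisfies the SOS membership requirements, but every element of $Q_{2k}(h_{X_1},\ldots,h_{X_{n_X}})$ is manifestly non-negative on $X$ (and similarly for the $(t,x)$-constraints on $[0,T]\times X$ and for the terminal constraint on $X_T$). Hence $(v,w,p)$ is also feasible for $D$. Theorem \ref{thm:w sequence} then immediately yields $w_k(x) \geq I_{{\cal X}^*}(x)$ for all $x \in X$ and all $k$.

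Next I would establish convergence of the optimal values: $d^*_k \searrow \lambda({\cal X}^*)$ as $k \to \infty$. Since there is no duality gap between $P_k$ and $D_k$, and since $P_k$ is a Lasserre moment relaxation of $P$, this is the standard Lasserre hierarchy convergence result. The archimedean hypothesis required for Putinar's Positivstellensatz is furnished exactly by the ball-type constraint $h_{X_i}(x) = C_X - \|x\|_2^2 \geq 0$ supplied by Remark \ref{rem:assumptions} (and the analogous $h_\tau = t(T-t)$ for the time variable), together with the compactness of $X_T$. Combined with Theorem \ref{thm:optP} (the value of $P$ equals $\lambda({\cal X}^*)$), this gives $\int_X w_k\, d\lambda = d^*_k \to \lambda({\cal X}^*) = \int_X I_{{\cal X}^*}\, d\lambda$. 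Because $w_k - I_{{\cal X}^*} \geq 0$ pointwise, this integral convergence is exactly $L^1$ convergence of $w_k$ to $I_{{\cal X}^*}$ from above. This step is the main technical obstacle, since it relies on invoking the full machinery of Putinar representations and the density of polynomial test functions (via Stone-Weierstrass), and one has to verify that the archimedean condition carries through all the localizing constraints appearing in $P_k$.

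Finally I would handle $\bar{w}_k$. By construction $\bar{w}_k$ is pointwise non-increasing in $k$, and $I_{{\cal X}^*} \leq \bar{w}_k \leq w_k$. The squeeze from the $L^1$ convergence of $w_k$ immediately gives $\bar{w}_k \to I_{{\cal X}^*}$ in $L^1$. For the almost uniform statement, monotone non-increasing and bounded below by $I_{{\cal X}^*}$ implies that $\bar{w}_k$ converges pointwise on all of $X$ to some limit $\bar{w}_\infty \geq I_{{\cal X}^*}$; the monotone convergence theorem applied to $\bar{w}_1 - \bar{w}_k$ combined with $L^1$ convergence forces $\bar{w}_\infty = I_{{\cal X}^*}$ $\lambda$-a.e. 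Since $X$ has finite Lebesgue measure, Egorov's theorem upgrades this pointwise a.e. convergence to almost uniform convergence on $X$, completing the proof.
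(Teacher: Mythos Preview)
Your proposal is correct, and the reduction to (i) pointwise domination and (ii) convergence of the optimal values is exactly the right skeleton. However, the route you take to establish (ii) differs from the paper's.

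The paper argues entirely on the dual side: starting from a near-optimal continuous triple $(v,w,p)$ for $D$ supplied by Theorem~\ref{thm:w sequence}, it perturbs to obtain strict feasibility, approximates the perturbed triple by polynomials via a $C^1$ Stone--Weierstrass argument, and then invokes Putinar's Positivstellensatz to certify that the resulting polynomial triple lies in the relevant quadratic modules, hence is feasible for $D_k$ for large enough $k$. The bound $\int_X (\hat w - I_{\mathcal X^*})\,d\lambda = O(\epsilon)$ then follows by direct estimation. You instead work through the primal: you invoke the convergence of Lasserre's moment hierarchy for the generalized moment problem $P$ as a black box to get $p^*_k \searrow p^*$, transfer to $d^*_k$ via the zero duality gap for $P_k$/$D_k$, and combine with the pointwise domination $w_k \ge I_{\mathcal X^*}$ to conclude $L^1$ convergence. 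Both are valid; your approach is shorter but leans on a heavier external result (and you should note that, beyond the archimedean condition, the Lasserre GMP convergence also needs the feasible moment sequences of $P_k$ to be bounded, which here follows from $\mu_0 + \hat\mu_0 = \lambda$ together with Liouville's equation with $v \equiv 1$ and $v = t$). The paper's approach is more self-contained and, incidentally, does not use the $P_k$/$D_k$ duality result at all. Note also that in the paper the value convergence (Corollary~\ref{thm:convergence of dual sdp}) is derived \emph{from} this theorem rather than used to prove it, so your ordering is the reverse of the paper's.

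For the $\bar w_k$ statement, the paper extracts an almost-uniformly convergent subsequence of $(w_k)$ and squeezes $\bar w_k$ against the running minimum over that subsequence. Your argument via monotonicity plus Egorov is cleaner and equally valid.
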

\begin{proof}
From Theorem \ref{thm:w sequence}, for every $\epsilon > 0$, there exists a feasible tuple of functions $(v,w,p) \in C^1\left([0,T] \times X \right) \times C(X) \times \left(C([0,T] \times X)\right)^m$ such that $w \geq I_{\cal X^*}$ and $\int_X (w - I_{\cal X^*}) d\lambda < \epsilon$. Let $\tilde{v}(t,x) := v(t,x) - 3\epsilon T + 3(T+1)\epsilon$, $\tilde{w}(x) := w(x) + 3(T+3)\epsilon$ and $[\tilde{p}]_i(t,x) = [p]_i(t,x) + (2\epsilon)/m, \forall i = \{1,\dots,m\}$. Then, $\Lf \tilde{v} = \Lf v - 3\epsilon$, $\tilde{v}(t,x) = v(T,x) + 3\epsilon$, $\tilde{w}(x) - \tilde{v}(0,x) = 1 + 6\epsilon$, and $\Lg \tilde{v} = \Lg v$. Since the sets $X$ and $[0,T] \times X$ are compact, and by a generalization of the Stone-Weierstrass theorem that allows for the simultaneous uniform approximation of a function and its derivatives by a polynomial \cite[pp. 65-66]{Hirsch76}, we are guaranteed the existence of polynomials $\hat{v}, \hat{w}, [\hat{p}]_i$ such that $\| \hat{v} - \tilde{v} \|_{\infty} < \epsilon$, $\| \Lf \hat{v} - \Lf \tilde{v} \|_{\infty} < \epsilon$, $\| \Lg \hat{v} - \Lg \tilde{v} \|_{\infty} < \epsilon/m$, $\| \hat{w} - \tilde{w} \|_{\infty} < \epsilon$ and  $\| [\hat{p}]_i - [\tilde{p}]_i \|_{\infty} < \epsilon/m$. It is easily verified that these polynomials \emph{strictly} satisfy the constraints of $D_k$. Hence, by Putinar's Positivstellensatz \cite{Lasserre10} and Remark \ref{rem:assumptions}, we are guaranteed that these polynomials are feasible for $D_k$ for high enough degree of multiplier polynomials. We further note that $\hat{w} \geq w$. Then, $\int_X |\tilde{w} - \hat{w}| d\lambda \leq \epsilon \lambda(X)$, and thus $\int_X (\hat{w} - w) d\lambda \leq \epsilon \lambda(X) (3T + 10)$. Hence, since $w \geq I_{\cal X^*}$ and $\int_X (w - I_{\cal X^*}) d\lambda < \epsilon$ by assumption, it follows that $\int_X (\hat{w} - I_{\cal X^*}) d\lambda < \epsilon(1 + \lambda(X)(3T + 10))$ and $\hat{w} \geq I_{\cal X^*}$. This concludes the first part of the proof since $\epsilon$ was arbitrarily chosen.

The convergence of $w_k$ to $I_{\cal X^*}$ in $L^1$ norm implies the existence of a subsequence $w_{k_i}$ that converges almost uniformly to $I_{\cal X^*}$ ~\cite[Theorems 2.5.2, 2.5.3]{Ash72}. Since $\bar{w}_k(x) \leq \textrm{min}\{w_{k_i} : k_i \leq k\}$, this is sufficient to establish the second claim.
\end{proof}

\begin{corollary} \label{thm:convergence of dual sdp}
 $\{d^*_k\}_{k=1}^{\infty}$ and $\{p^*_k\}_{k=1}^{\infty}$ converge monotonically from above to the optimal value of $D$ and $P$. 
\end{corollary}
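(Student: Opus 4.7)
The plan is to assemble the corollary from three ingredients that are already essentially in place: (i) nesting of the SOS cones giving monotonicity of $\{d_k^*\}$, (ii) a feasibility inclusion $D_k \subseteq D$ giving the lower bound $d_k^* \geq d^*$, and (iii) Theorem~\ref{thm:w convergence} giving the matching upper bound. The $\{p_k^*\}$ claims then follow from $p_k^* = d_k^*$ by the zero duality gap at each finite level, combined with $d^* = p^*$.

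First I would establish monotonicity of $\{d_k^*\}$. Observe that a polynomial of degree $\leq 2k$ is also of degree $\leq 2(k+1)$, and if $q = s_0 + \sum_i s_i h_{X_i}$ with SOS multipliers $s_i$ of degree $\leq 2k$, the same decomposition certifies $q \in Q_{2(k+1)}(h_{X_1},\dots)$. Hence $Q_{2k} \subseteq Q_{2(k+1)}$, and every $(v,w,p)$ feasible for $D_k$ is feasible for $D_{k+1}$. Since the objective $l^T\text{vec}(w)$ is unchanged, the infima satisfy $d_{k+1}^* \leq d_k^*$. The corresponding statement $p_{k+1}^* \leq p_k^*$ for the primal then follows from the per-$k$ zero duality gap proved above.

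Second, I would show each $d_k^*$ upper-bounds $d^*$. Any feasible tuple $(v,w,p)$ of $D_k$ consists of polynomials, hence of continuous functions with $v \in C^1([0,T]\times X)$. The SOS memberships imply pointwise non-negativity on the corresponding sets: $-\mathcal{L}_f v - \mathbf{1}^T p \geq 0$ on $[0,T]\times X$, $[p]_i \pm [\mathcal{L}_g v]_i \geq 0$ (hence $[p]_i \geq |[\mathcal{L}_g v]_i| \geq 0$), $w \geq 0$ and $w - v(0,\cdot) - 1 \geq 0$ on $X$, and $v(T,\cdot) \geq 0$ on $X_T$. These are exactly the constraints of $D$, so $(v,w,p)$ is $D$-feasible and hence $d_k^* \geq d^*$.

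Third, I would use Theorem~\ref{thm:w convergence} to show $d_k^* \to d^*$. The proof of that theorem constructs, for every $\epsilon > 0$, a tuple of polynomials $(\hat v, \hat w, \hat p)$ strictly satisfying the constraints of $D_k$ for all sufficiently large $k$ (via Putinar's Positivstellensatz) with $\int_X (\hat w - I_{\mathcal{X}^*})\, d\lambda < \epsilon(1 + \lambda(X)(3T+10))$. Consequently, for such $k$, $d_k^* \leq l^T\text{vec}(\hat w) \leq \lambda(\mathcal{X}^*) + C\epsilon$ with $C$ independent of $\epsilon$. Combining with $d_k^* \geq d^* = p^* = \lambda(\mathcal{X}^*)$ from Theorem~\ref{thm:optP} and the no-duality-gap result between $P$ and $D$, and since $\epsilon$ is arbitrary, $d_k^* \to d^*$ from above. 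The corresponding statement for $\{p_k^*\}$ follows immediately from $p_k^* = d_k^*$.

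The main obstacle is a bookkeeping one rather than a technical one: one must be careful that the construction in Theorem~\ref{thm:w convergence} is actually feasible for \emph{every} $k$ beyond some threshold (to chain with monotonicity) and that the objective bound $l^T\text{vec}(\hat w) = \int_X \hat w \, d\lambda$ translates directly to the infimum $d_k^*$. Both points are essentially contained in the already-established strict feasibility argument, so no new analysis is required beyond citing Theorem~\ref{thm:w convergence} and the per-$k$ strong duality.
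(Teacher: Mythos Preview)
Your proposal is correct and is essentially a careful unpacking of the paper's own proof, which is the one-liner ``This is a direct consequence of Theorem~\ref{thm:w convergence}'' (the printed self-reference to the corollary is a typo). The three ingredients you isolate---monotonicity from $Q_{2k}\subseteq Q_{2(k+1)}$, the inclusion of $D_k$-feasible tuples into $D$, and the $\epsilon$-tight polynomial construction from the proof of Theorem~\ref{thm:w convergence} combined with the per-$k$ and infinite-dimensional zero duality gaps---are exactly what that one line is invoking.
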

\begin{proof}
This is a direct consequence of Theorem \ref{thm:convergence of dual sdp}.
\end{proof}

Next, we prove that the $1$-superlevel set of the polynomial $w$ converges in Lebesgue measure to the largest achievable BRS $\cal X^*$.

\begin{theorem}
For each $k \in \N$, let $w_k \in \R_{2k}[x]$ denote the w-component of the solution to $D_k$, and let ${\cal X}_k := \{x \in \R^n \ : \ w_k(x) \geq 1\}$. Then, $\lim_{k \to \infty} \lambda({\cal X}_k \backslash {\cal X}^*) = 0$.
\end{theorem}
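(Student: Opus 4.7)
The plan is to sandwich $\lambda(\mathcal{X}_k)$ between $\lambda(\mathcal{X}^*)$ and a quantity tending to $\lambda(\mathcal{X}^*)$, by combining the outer-approximation property of feasible $w$ in $D$ with the $L^1$ convergence of $w_k$ already established in Theorem~\ref{thm:w convergence}. Since I will show $\mathcal{X}^* \subseteq \mathcal{X}_k$, it suffices to prove $\lambda(\mathcal{X}_k \cap X) \to \lambda(\mathcal{X}^*)$, whence $\lambda(\mathcal{X}_k \setminus \mathcal{X}^*) = \lambda(\mathcal{X}_k \cap X) - \lambda(\mathcal{X}^*) \to 0$. (The argument localizes to $X$, consistent with $\mathcal{X}^* \subseteq X$, since $w_k$ is only constrained on $X$.)

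First, I would check that every feasible tuple $(v,w,p)$ of $D_k$ is also feasible for $D$: each SOS membership, namely $-\Lf v - \mathbf{1}^T p \in Q_{2k}(h_\tau, h_{X_1}, \ldots, h_{X_{n_X}})$, $p \pm (\Lg v)^T \in (Q_{2k}(\ldots))^m$, $w \in Q_{2k}(h_{X_1}, \ldots, h_{X_{n_X}})$, $w - v(0,\cdot) - 1 \in Q_{2k}(\ldots)$, and $v(T,\cdot) \in Q_{2k}(h_{T_1}, \ldots, h_{T_{n_T}})$, forces the corresponding pointwise inequality required by $D$ on $[0,T]\times X$, $X$, or $X_T$ respectively. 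Hence Theorem~\ref{thm:w sequence} applies to the optimal $w_k$, giving $w_k \geq I_{\mathcal{X}^*}$ on $X$ and in particular $\mathcal{X}^* \subseteq \mathcal{X}_k$.

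Second, I would use $w_k \in Q_{2k}(h_{X_1}, \ldots, h_{X_{n_X}})$ to conclude $w_k \geq 0$ on $X$; combined with the defining property $w_k \geq 1$ on $\mathcal{X}_k$, this gives the pointwise bound $w_k(x) \geq I_{\mathcal{X}_k \cap X}(x)$ for every $x \in X$. Integrating against the Lebesgue measure yields $\int_X w_k\, d\lambda \geq \lambda(\mathcal{X}_k \cap X)$. By Theorem~\ref{thm:w convergence}, the left-hand side converges to $\int_X I_{\mathcal{X}^*}\, d\lambda = \lambda(\mathcal{X}^*)$, so $\limsup_k \lambda(\mathcal{X}_k \cap X) \leq \lambda(\mathcal{X}^*)$. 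Combined with the reverse inequality from the first step, this forces $\lambda(\mathcal{X}_k \cap X) \to \lambda(\mathcal{X}^*)$, which is the desired conclusion.

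The only real bookkeeping subtlety is separating the global super-level set $\mathcal{X}_k \subseteq \mathbb{R}^n$ from its intersection with $X$ where the SOS constraints actually control $w_k$; outside of $X$ the polynomial $w_k$ is essentially unconstrained, so the localization to $X$ is essential. Apart from this, the argument is a short deduction from Theorem~\ref{thm:w convergence} via the elementary fact that a non-negative function dominates the indicator of its own super-level set, so no heavy machinery beyond what is already in the paper is needed.
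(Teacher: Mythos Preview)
Your proposal is correct and follows essentially the same route as the paper: use feasibility of $w_k$ in $D$ (via Theorem~\ref{thm:w sequence}) to get $\mathcal{X}^*\subseteq\mathcal{X}_k$, use the pointwise bound $w_k\ge I_{\mathcal{X}_k}$ on $X$, and then invoke the $L^1$ convergence from Theorem~\ref{thm:w convergence} to sandwich $\lambda(\mathcal{X}_k)$. Your explicit localization to $X$ is a slight sharpening of the paper's presentation, which tacitly identifies $\mathcal{X}_k$ with $\mathcal{X}_k\cap X$.
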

\begin{proof}
Using Theorem \ref{thm:w sequence} we see $w_k \geq I_{{\cal X}_k} \geq I_{{\cal X}^*}$. From Theorem \ref{thm:w convergence}, we have $w_k \to I_{{\cal X}^*}$ in $L^1$ norm on $X$. Hence: 
\begin{equation}
\begin{aligned}
\lambda({\cal X}^*) &= \lim_{k \to \infty} \int_X w_k d\lambda 
& \geq \lim_{k \to \infty} \int_X I_{{\cal X}_k} d\lambda = \lim_{k \to \infty} \lambda({\cal X}_k). \nonumber
\end{aligned}
\end{equation}
But since ${\cal X}^* \subset {\cal X}_k$ for all $k$, we must have $\lim_{k \to \infty} \lambda({\cal X}_k) = \lambda({\cal X}^*)$ and thus $\lim_{k \to \infty} \lambda({\cal X}_k \backslash {\cal X}^*) = 0$.
\end{proof}

Finally, we prove a convergence result for the sequence of controllers generated by \eqref{eq:controllers}. For each $k \in \N$, let $u^*_k$ denote the controller constructed by Equation \eqref{eq:controllers} using the optimizers $\mathbf{y}_k$ of $P_k$. Let $y_{k,\mu}^*$ be the optimizing moment sequence corresponding to $\mu$. 
\begin{theorem} \label{thm:controller convergence}
	Let $\{\mu^*_k\}_{k = 1}^\infty$ be any sequence of measures such that the truncated moments of $\mu^*_k$ match $y_{k,\mu}^*$. Then, there exists an optimizing vector of measures $(\sigma^{+*},\sigma^{-*},\hat{\sigma}^*,\mu^*,\mu^*_0,\hat{\mu}^*_0,\mu^*_T)$ for $P$, a $u^* \in L^1([0,T]\times X)$ generated using $\sigma^{+*},\sigma^{-*},$ and $\mu^*$ according to Equation \eqref{eq:measure_control_synthesis}, and a subsequence $\{k_i\}_{i=1}^{\infty} \subset \N$ such that:
	\begin{equation} \label{eq:integral}
		\int_{[0,T] \times X}\hspace*{-1cm} v(t,x)\big([u^*_{k_i}]_j(t,x) d\mu^*_{k_i}(t,x) - [u^*]_j(t,x) d\mu^*(t,x)\big) \hspace*{-.1cm} \xrightarrow{i\to\infty} \hspace*{-.1cm} 0, 
		\end{equation}
for all $v \in C^1([0,T]\times X)$, and each $j \in \{1,\dots,m\}$.

\end{theorem}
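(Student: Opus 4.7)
My plan is to extract a weak-$*$ limit, along a subsequence, of measure realizers for the moment optima of $P_k$, verify that the limit is feasible and optimal for $P$, invoke Theorem~\ref{thm:lp control} to obtain the limit controller $u^*$, and then pass to the limit in Equation~\eqref{eq:integral} using the moment-matching identity of Equation~\eqref{eq:controllers} together with a uniform $L^1(\mu^*_k)$ bound on $[u^*_k]_j$.

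I begin by selecting measure realizers $(\sigma^{+*}_k, \sigma^{-*}_k, \hat\sigma^*_k, \mu^*_k, \mu^*_{0,k}, \hat\mu^*_{0,k}, \mu^*_{T,k})$ whose truncated moments agree with the optimizer $\mathbf{y}^*_k$ of $P_k$. Non-negativity and $\mu^*_{0,k} + \hat\mu^*_{0,k} = \lambda$ give $\mu^*_{0,k}(X) \leq \lambda(X)$; testing Liouville's equation with $v(t,x) = T - t$ yields $\mu^*_k([0,T]\times X) = T\,\mu^*_{0,k}(X)$; and $[\sigma^{+*}_k]_j + [\sigma^{-*}_k]_j + [\hat\sigma^*_k]_j = \mu^*_k$ bounds the remaining masses. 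Banach--Alaoglu on the duals of $C([0,T]\times X)$, $C(X)$, and $C(X_T)$ then furnishes a subsequence $k_i$ along which every measure converges weakly-$*$, with limit $(\sigma^{+*}, \sigma^{-*}, \hat\sigma^*, \mu^*, \mu^*_0, \hat\mu^*_0, \mu^*_T)$. Each constraint of $P$ is linear in the measures and tested against continuous functions (note that $\Lf v$ and $\Lg v$ lie in $C([0,T]\times X)$ whenever $v \in C^1$, since $f$ and $g$ are polynomial), so the limit is feasible, and non-negativity is preserved under weak-$*$ limits. Because $\mu^*_{0,k_i}(X) = p^*_{k_i}$ converges to the supremum of $P$ by Corollary~\ref{thm:convergence of dual sdp}, the limit tuple is optimal for $P$, and Theorem~\ref{thm:lp control} supplies $u^* \in L^1([0,T]\times X, U)$ with $d[\sigma^{+*} - \sigma^{-*}]_j = [u^*]_j\, d\mu^*$.

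The key step is a uniform bound on $\int [u^*_k]_j^2 \, d\mu^*_k$. The moment identity $y^*_{k,[\sigma^+]_j} + y^*_{k,[\sigma^-]_j} + y^*_{k,[\hat\sigma]_j} = y^*_{k,\mu}$ is linear, so $M_k(y^*_{k,\mu}) = M_k(y^*_{k,[\sigma^+]_j}) + M_k(y^*_{k,[\sigma^-]_j}) + M_k(y^*_{k,[\hat\sigma]_j})$, and positive semidefiniteness of each summand yields $M_k(y^*_{k,\mu}) \succeq M_k(y^*_{k,[\sigma^{\pm*}]_j})$; equivalently $\int [u^*_k]_j^2 \, d[\sigma^{\pm*}_k]_j \leq \int [u^*_k]_j^2 \, d\mu^*_k$. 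Combining Equation~\eqref{eq:controllers}, which reads $\int [u^*_k]_j^2 \, d\mu^*_k = \int [u^*_k]_j \, d[\sigma^{+*}_k - \sigma^{-*}_k]_j$, with Cauchy--Schwarz on the non-negative measures $[\sigma^{\pm*}_k]_j$,
\[
\int [u^*_k]_j^2 \, d\mu^*_k \;\leq\; 2 \Big(\textstyle\int [u^*_k]_j^2 \, d\mu^*_k\Big)^{1/2} \mu^*_k([0,T]\times X)^{1/2},
\]
so $\int [u^*_k]_j^2 \, d\mu^*_k \leq 4\,\mu^*_k([0,T]\times X)$, and a further Cauchy--Schwarz gives a uniform bound on $\int |[u^*_k]_j|\,d\mu^*_k$.

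Finally, to pass to the limit in Equation~\eqref{eq:integral}, fix $v \in C^1([0,T]\times X)$ and $\varepsilon > 0$, and pick a polynomial $q$ with $\|v - q\|_\infty < \varepsilon$ by Stone--Weierstrass. For every $k_i$ with $\deg q \leq k_i$, Equation~\eqref{eq:controllers} yields $\int q\,[u^*_{k_i}]_j\,d\mu^*_{k_i} = \int q\,d[\sigma^{+*}_{k_i} - \sigma^{-*}_{k_i}]_j$, whose weak-$*$ limit along $k_i$ is $\int q\,d[\sigma^{+*} - \sigma^{-*}]_j = \int q\,[u^*]_j\,d\mu^*$. Replacing $q$ by $v$ introduces an error of at most $\varepsilon$ times the sum of the uniformly bounded quantities $\int |[u^*_{k_i}]_j|\,d\mu^*_{k_i}$, $\|[\sigma^{+*}_{k_i} - \sigma^{-*}_{k_i}]_j\|_{TV}$, and $\int |[u^*]_j|\,d\mu^*$. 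Sending $k_i \to \infty$ and then $\varepsilon \to 0$ finishes the argument. The principal obstacle is precisely the $L^1(\mu^*_k)$ bound of the preceding paragraph: the controllers $[u^*_k]_j$ have degree $k \to \infty$ and could a priori blow up on the support of $\mu^*_k$, invalidating the polynomial approximation step, but the semidefinite ordering $M_k(y^*_{k,\mu}) \succeq M_k(y^*_{k,[\sigma^{\pm*}]_j})$ forced by $\sigma^+ + \sigma^- + \hat\sigma = \mu$ absorbs $[u^*_k]_j^2$ inside $\mu^*_k$ and rescues the estimate.
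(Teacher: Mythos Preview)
Your approach is the same strategy the paper sketches: extract a convergent subsequence, verify that the limit is optimal for $P$, read off $u^*$ via Theorem~\ref{thm:lp control}, use the moment identity \eqref{eq:controllers} for polynomial test functions, and extend by density. The paper packages the extraction and optimality steps into a citation of Lasserre's Theorem~4.7, while you spell them out. Importantly, you supply a step the paper's sketch glosses over: the density reduction from $C^1$ to polynomials needs a \emph{uniform} bound on $\int |[u^*_{k}]_j|\,d\mu^*_{k}$ (otherwise replacing $v$ by $q$ is not controlled as $k\to\infty$), and your $L^2(\mu^*_k)$ estimate via $M_k(y^*_{k,\mu})\succeq M_k(y^*_{k,[\sigma^\pm]_j})$ and matrix Cauchy--Schwarz is exactly what is required.

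There is, however, one genuine gap in your write--up. You \emph{assume} measure realizers $(\sigma^{+*}_k,\sigma^{-*}_k,\hat\sigma^*_k,\mu^*_{0,k},\hat\mu^*_{0,k},\mu^*_{T,k})$ exist for the truncated optimizers $\mathbf{y}^*_k$ of $P_k$. Positive semidefiniteness of the truncated moment and localizing matrices is necessary but not sufficient for a representing measure on the prescribed support; the truncated moment problem generally requires a flat extension. So the Banach--Alaoglu step, as stated, may be vacuous for all components except $\mu^*_k$ (which the theorem hands you). The fix is the one behind the Lasserre result the paper cites: do not pick finite--level realizers at all. Instead, show each moment $y^{*,\alpha}_{k,\gamma}$ is bounded uniformly in $k$ (your mass bounds already give this, using $|y^\alpha|\le C^{|\alpha|} y^0$ from the localizing constraint $h_{X_i}(x)=C_X-\|x\|^2$), extract by diagonalization a subsequence along which every individual moment converges, and then invoke Putinar's theorem to conclude that each limit sequence is the full moment sequence of a measure supported on the correct set. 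Feasibility and optimality for $P$ then follow as you argue. Your key $L^2$ bound survives verbatim at the matrix level: write $\int [u^*_k]_j^2\,d\mu^*_k=\mathrm{vec}([u^*_k]_j)^T M_k(y^*_{k,\mu})\mathrm{vec}([u^*_k]_j)$, use \eqref{eq:controllers}, and apply $|a^T M b|\le (a^T M a)^{1/2}(b^T M b)^{1/2}$ with $M=M_k(y^*_{k,[\sigma^\pm]_j})\succeq 0$ and $b=e_0$; no representing measure for $\sigma^\pm_k$ is needed. Likewise, in the final paragraph the middle term $\int q\,[u^*_{k_i}]_j\,d\mu^*_{k_i}$ equals $\langle q, y^*_{k_i,[\sigma^+]_j}-y^*_{k_i,[\sigma^-]_j}\rangle$ by \eqref{eq:controllers}, so its limit is read off directly from moment convergence without ever forming $\sigma^{\pm*}_{k_i}$; the $\|[\sigma^{+*}_{k_i}-\sigma^{-*}_{k_i}]_j\|_{TV}$ term in your error budget is then unnecessary.
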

\begin{proof}
We provide only a sketch of the proof due to space restrictions. First, note that the set of test functions $v$ can be restricted to polynomials since the set of polynomials is dense in $C^1([0,T]\times X)$.  Further, the construction of $u^*_{k_i}$ from Equation \eqref{eq:controllers} ensures that \eqref{eq:integral} holds for $v$ up to degree $k_i$. The rest follows directly from the proof of Theorem 4.7 in \cite{Lasserre10}. 
\end{proof}

\section{Examples} 
\label{sec:examples}
This section provides a series of numerical experiments on example
systems of increasing complexity.  SDPs were prepared using a custom
software toolbox and the modeling tool YALMIP \cite{Lofberg04}.  For
simulations, control laws are taken to be the saturation of
the polynomial law derived by the proposed method.  The programs, whose source code is available for download\footnote{\url{https://groups.csail.mit.edu/locomotion/software.html}}, are
solved using SeDuMi 1.3 \cite{Sturm99}, for the first three and last examples, and the SDPT3 solver \cite{Tutuncu03}, for the fourth example, on a machine with 8 Intel Xeon
processors with a clock speed of 3.1 GHz and 32 GB RAM.  

Additionally, for several of the examples we examine a different objective wherein we look to drive initial conditions starting in $X$ to $X_T$ at
any time $t \in [0,T]$ (commonly referred to as a ``free final time''
problem).  The analogous BRS approximation problem is addressed in
\cite{Henrion12} and the control synthesis problem follows using our approach in a straightforward manner. We make clear when we employ this different objective while describing each of our examples.

\subsection{Double Integrator}
The double integrator is a two state, single input 
system given by $\dot x_1 = x_2, \ \dot x_2 = u$, 
with $u$ restricted to the interval $U = [-1,1]$.  Setting the target
set to the origin, $X_T = \{0\}$, the optimal BRS $\mathcal{X}^*$ can be computed analytically based on a minimum time ``bang-bang controller'' \cite[pp. 136]{Bertsekas05a}. Note that this is a challenging system for grid based optimal control methods, since they require high resolution near the switching surface of the ``bang-bang'' control law.
 
We take the bounding
set to be  $X = \{x \ | \ \|x\|^2 \leq 1.6^2 \}$. 
Figure~\ref{fig:double_integrator_w} compares the outer approximations
of $\mathcal{X}^*$ for $k = 2,3,4$.  The
quality of the approximations increases quickly. Figure~\ref{fig:double_integrator_w} also evaluates the
performance of the control laws $u_k$ by plotting the terminal states $x(T)$ for controlled solutions starting in $\mathcal{X}^*$.
Even for $k = 3$, reasonable performance is achieved. The running times for $k = 2,3,4$ are $0.3 s$, $0.7 s$, and $4.2 s$, respectively. 

\begin{figure}
  \centering 
\includegraphics[width=0.8\columnwidth]{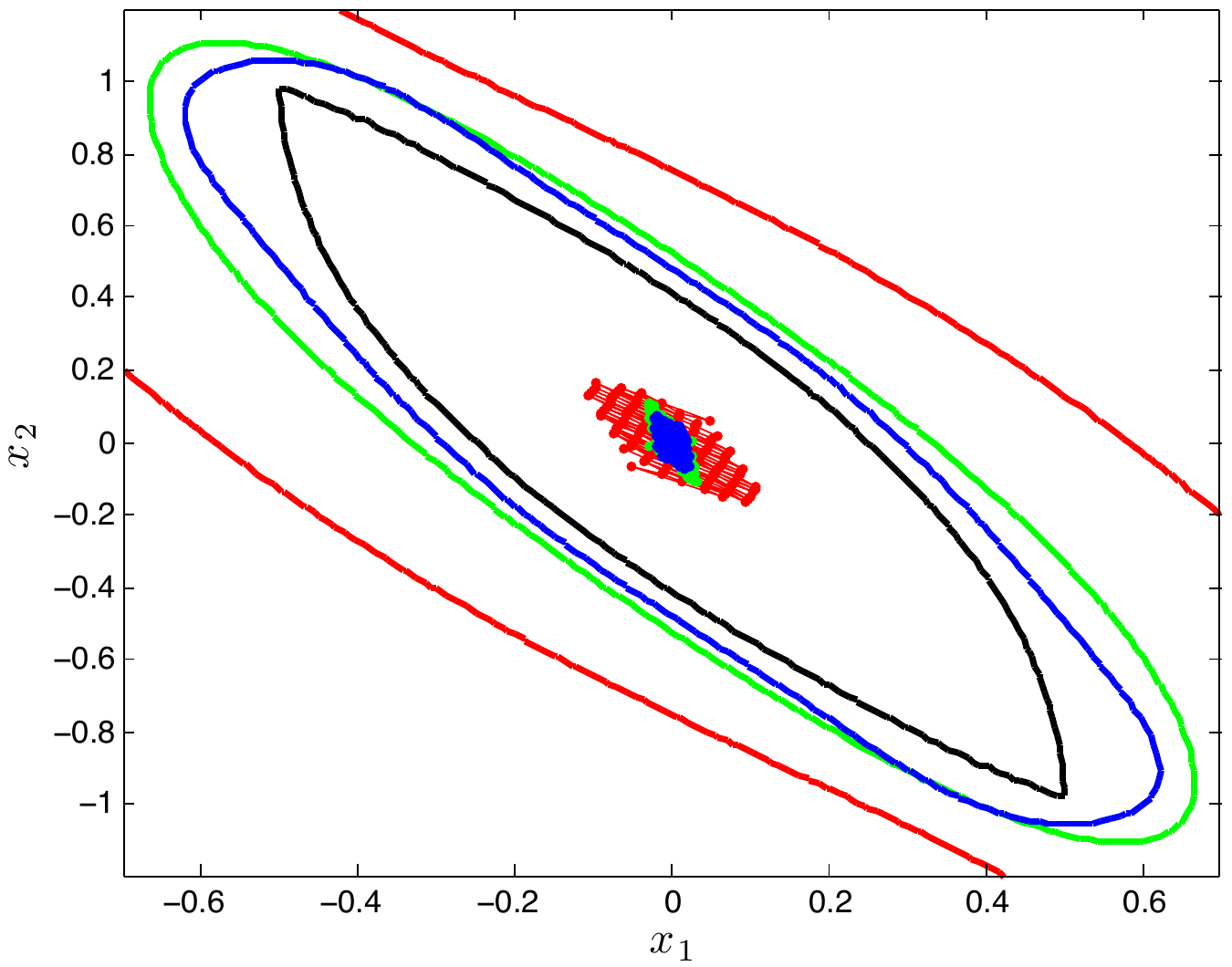}
\vspace*{-7pt}
 \caption{ An illustration of the convergence of outer approximations and the
  performance of controllers designed using our approach for increasing truncation degree, $k$, for the double integrator.  Solid
  lines indicate the outer approximations, defined by $w_k = 1$,
  for $k = 2$ (red), $k = 3$ (green), $k = 4$ (blue), and the boundary
  of the true BRS (solid black). Points indicate terminal states
  ($x(T)$) of controlled solutions with $x(0)$ inside the BRS using
  our generated feedback control laws $u_k$ (colors match the outer approximations).
	\label{fig:double_integrator_w}}
\vspace*{-30pt}
\end{figure}

\subsection{Ground Vehicle Model}

The ``Dubin's car'' \cite{Dubins57} is a popular model for autonomous ground and air vehicles and has been employed in a wide variety of applications \cite{Bhatia08, Chen07a, Gray12}. Its dynamics are:
\begin{equation}
 \dot{a} = v\cos(\theta), \quad \dot{b} = v\sin(\theta), \quad \dot{\theta} = \omega,
\end{equation}
where the states are the x-position ($a$), y-position ($b$) and yaw angle ($\theta$) of the vehicle and the control inputs are the forward speed ($v$) and turning rate ($\omega$). A change of coordinates can be applied to this system in order to make the dynamics polynomial \cite{DeVon07}. The rewritten dynamics are given by:
\begin{equation}
  \dot x_1 = u_1, \quad \dot x_2 = u_2, \quad \dot x_3 = x_2u_1 - x_1u_2.
\end{equation}
This system is also known as the Brockett integrator and is a popular benchmark since it is prototypical of many nonholonomic systems. Notice that the system has an uncontrollable linearization and does not admit a smooth time-invariant control law that makes the origin asymptotically stable \cite{DeVon07}. Hence, this example illustrates the advantage of our method when compared to linear control synthesis techniques. We solve the ``free final time'' problem to construct a time-varying control law that drives the initial conditions in $X = \{ x \  | \ \|x\|^2 \leq 4 \}$ to the target set $X_T = \{
x \ | \ \|x\|^2 \leq 0.1^2 \}$ by time $T=4$. In the Dubin's car coordinates, the target set is a neighborhood of the origin while being oriented in the positive a-direction. The control is restricted
to $u_1, u_2 \in [-1,1]$.  Figure~\ref{fig:brockett} plots outer approximations of the BRS for $k =
5$. Figure~\ref{fig:samp_brockett} illustrates two sample trajectories
generated using a feedback controller designed by our algorithm after
transforming back to the original coordinate system. 
Solving the SDP took $599$ seconds.

\begin{figure}
 \centering
   \subfigure[$k=5,$ $(x_1,x_3)$ plane\label{fig:brockett_1v3}]{\includegraphics[width=0.49\columnwidth]{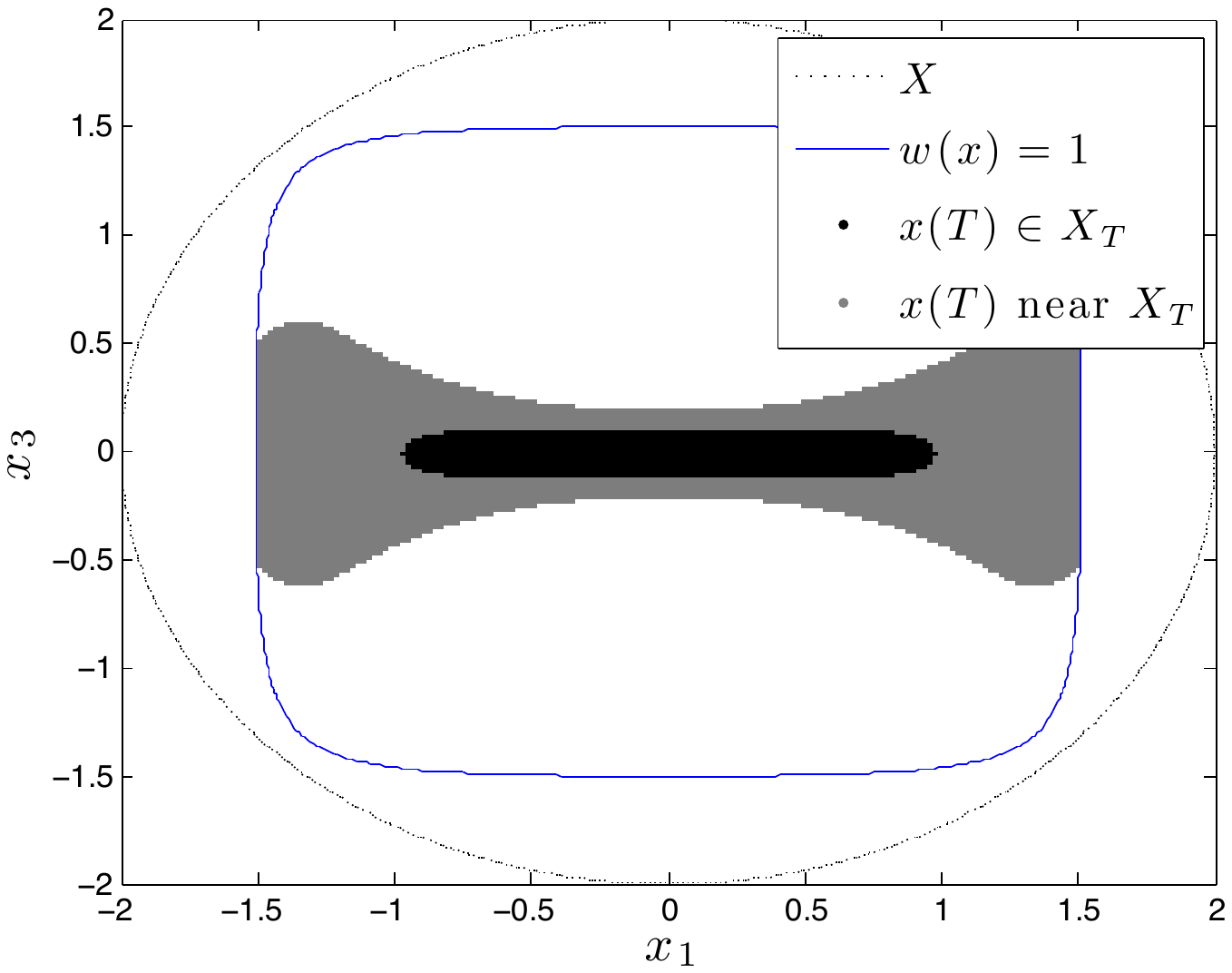}}
   \subfigure[$k=5,$ $(x_1,x_2)$ plane\label{fig:brockett_1v2}]{\includegraphics[width=0.49\columnwidth]{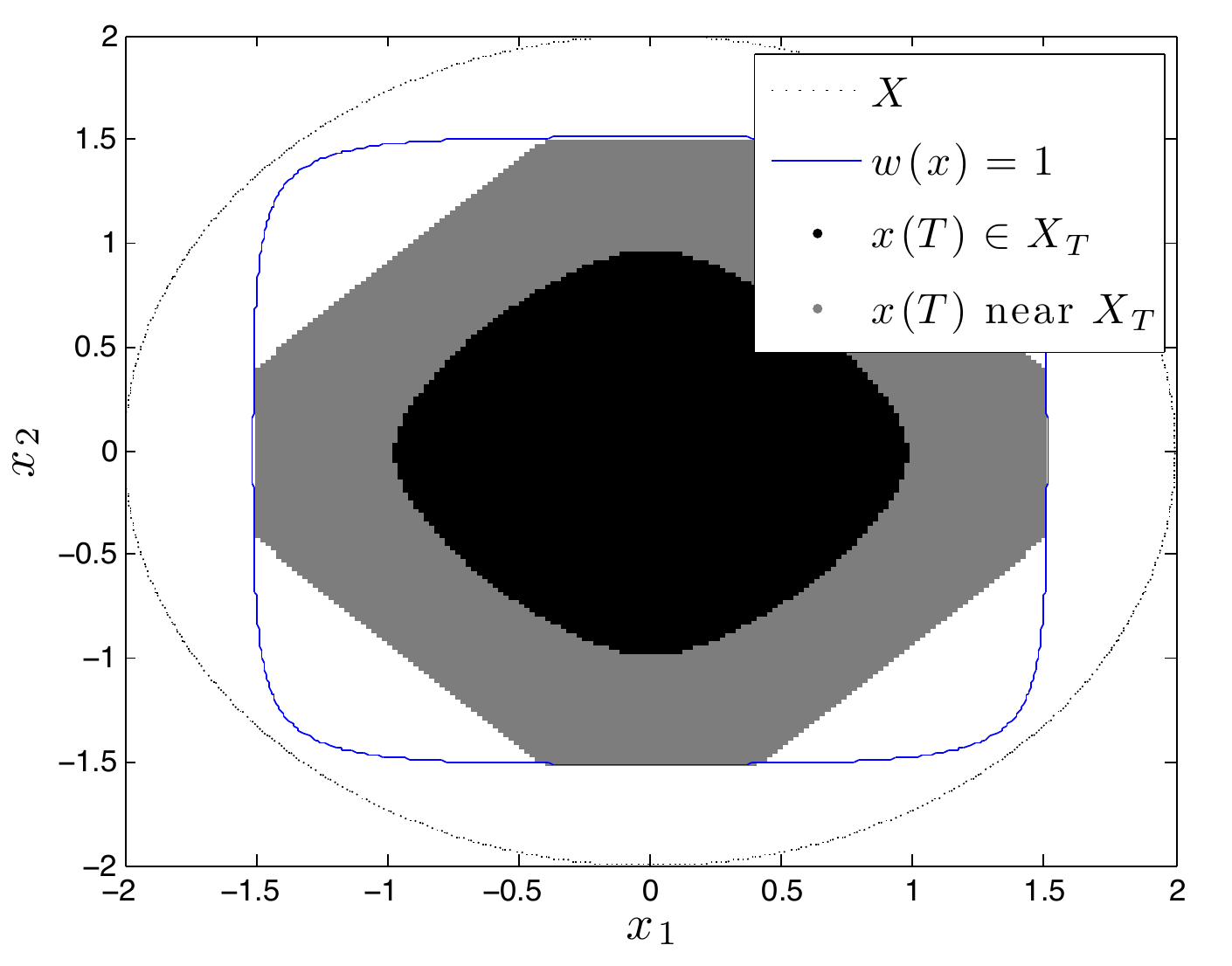}}
	\vspace*{-7pt}
    \caption{The boundary of $X$ (black line) and the outer approximation of the BRS (blue line) are shown in the $(x_1,x_3)$ and $(x_1,x_2)$ planes. In each plane, black points indicate initial conditions of controlled solutions with $x(T) \in X_T$ (i.e. $\|x(T)\|^2 \leq 0.1^2$), and grey points indicate initial conditions of solutions ending near the target set (specifically $\|x(T)\|^2 \leq 0.2^2$). \label{fig:brockett}}
\vspace*{-10pt}
\end{figure}

\begin{figure}
 \centering
   \includegraphics[width=0.8\columnwidth]{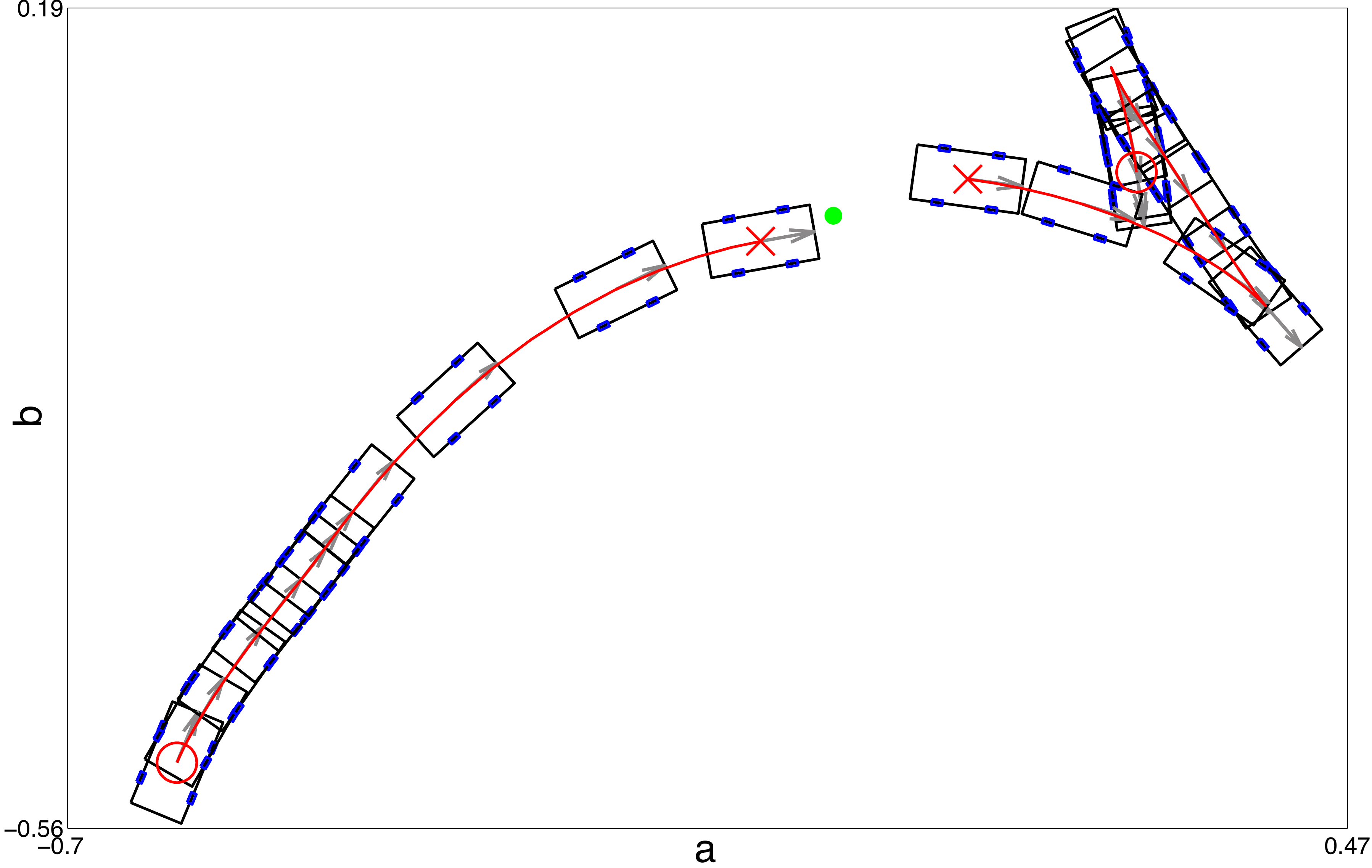}
    \caption{A pair of sample trajectories drawn in red generated by
      our algorithm for the Dubin's car system. Each drawn time sample
      of the car is colored black with blue--colored tires and grey
      forward--facing direction. The origin is marked by a green dot. Each trajectory is initialized at the
      `O' mark and terminates at the state where the target set is first
      reached, which is marked with an the `X' mark.  The trajectory
      starting in the upper right hand corner executes a three-point
      turn to arrive at the desired position and orientation.
    \label{fig:samp_brockett}}
\vspace*{-20pt}
\end{figure}

\subsection{Torque Limited Simple Pendulum}
Next, we consider the torque limited simple pendulum, described by the equations
\begin{equation}
  \dot x_1 = x_2, \qquad I \dot x_2 = mgl\sin(x_1) - b x_2 + u,
\end{equation}
where $x_1$ represents the angle $\theta$ from upright, $x_2$
represents the
angular rate $\dot \theta$, and $u$ represents a torque source at the
pivot constrained to take values in $U = [-3,3]$.
We take $m = 1$, $l
= 0.5$, $I = ml^2$,  $b = 0.1$ and $g =
9.8$.  The bounding set is defined by  $x_1 \in [-\pi,\pi)$ and
$x_2 \in [-8,8]$.
Our method can be readily adapted to handle dynamics with
trigonometric dependence on a state, $x$, so long as the dynamics are polynomial in
$\sin(x)$ and $\cos(x)$.  This is accomplished by introducing
indeterminates $c$ and $s$ identified with $\sin(x)$ and $\cos(x)$
and modifying the approach to work over the quotient ring associated
with the equation $1 = c^2+s^2$ \cite{Parrilo03}.  


For this example, we solve the ``free final time'' problem by taking $T = 1.5$, and defining the target set
as $X_T = \{ (x_1,x_2) \; | \; \cos(x_1)
\geq 0.95, x_2^2 \leq 0.05 \}$.  The running time for the SDP is $11$ mins $20$ secs for $k=5$. Figure~\ref{fig:pendulum} plots sample solutions and summarizes the initial conditions that reach the
target set. Notice that the controller is able to ``swing-up''
states close to the downright position to the upright configuration
despite the stringent actuator limits and a short time-horizon.  

\begin{figure}
  \centering
  \includegraphics[width=0.8\columnwidth]{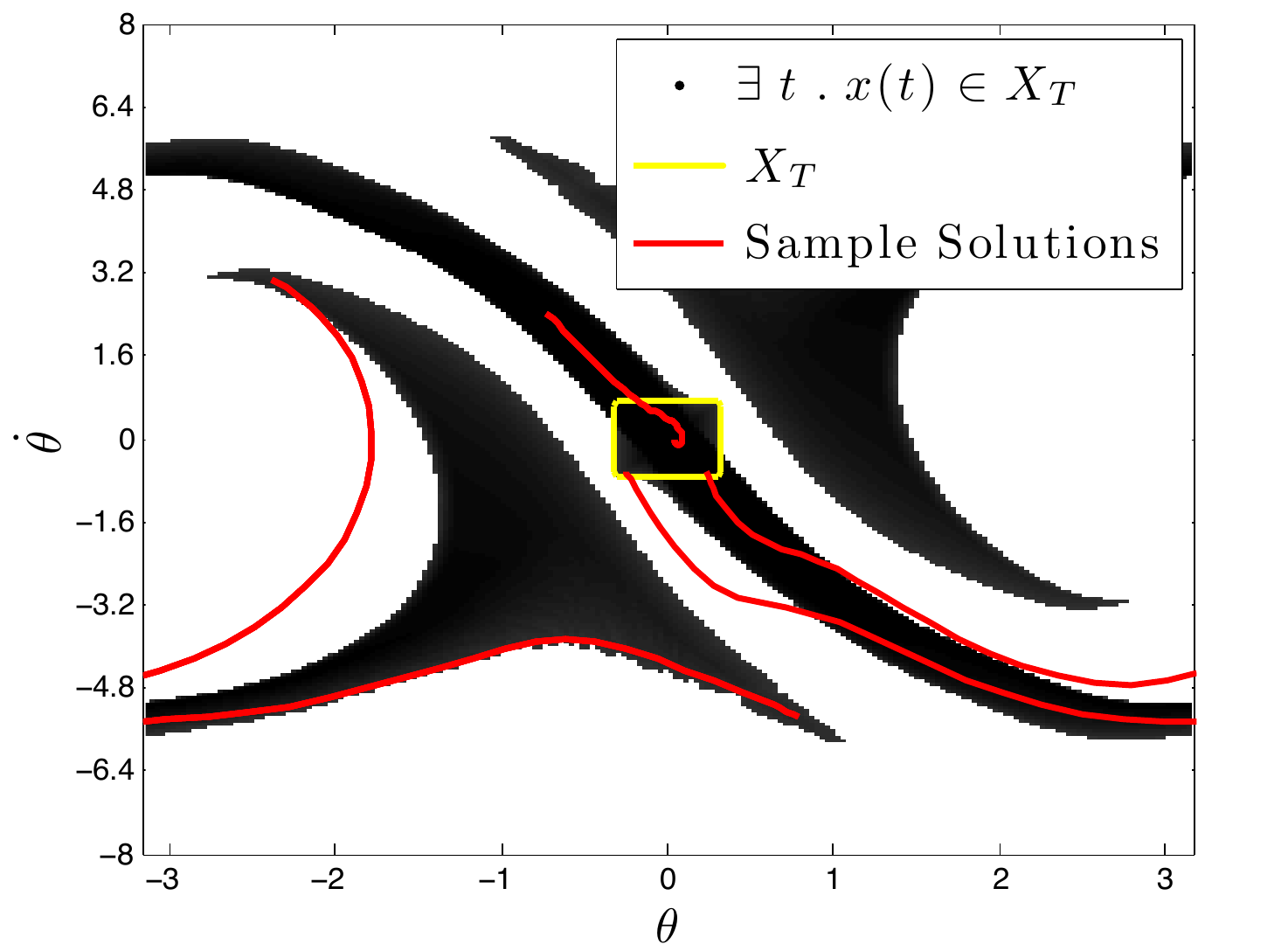}
\vspace*{-5pt}
  \caption{ A depiction of the controller performance for $k=5$ for the torque limited 
    simple pendulum. Black regions indicate sampled initial conditions whose controlled
    solutions pass through the target set (yellow square).  Three
    sample solutions are also plotted (red) each with terminal
    conditions in the target set. 
    Note the solution starting near
    $(-2,3.2)$ passes through zero velocity | the solution ``pumps''
    to reach the upright position.\label{fig:pendulum}
  }
 \vspace{-20pt}
\end{figure}

\subsection{Planar Quadrotor}

Finally, we demonstrate the scalability of our approach on a six state, two input planarized quadrotor model used in a various robotic applications \cite{Hoffmann07,Lupashin10}. The dynamics are defined by \cite{Steinhardt11a}:
\begin{equation}
\begin{aligned}
\ddot{x}_1 &= -(u_1 + u_2) \sin(\theta)/m \\
\ddot{x}_2 &= -g + (u_1 + u_2)\cos(\theta)/m \\
\ddot{\theta} &= L(u_2 - u_1)/I
\end{aligned}
\end{equation}
where $x_1,x_2,$ and $\theta$ are the horizontal and vertical positions, and the attitude of the quadrotor, respectively. The control inputs $u_1$ and $u_2$ are the force produced by the left and right rotors, respectively, and are bounded to have a thrust to weight ratio of $2.5$. Further, $L = 0.25$ is the length of the rotor arm, $m = 0.486$ is the mass, $I = 0.00383$ is the moment of inertia and $g = 9.8$ is the acceleration due to gravity. Using a time horizon of $T = 4$, we solve a ``free final time'' problem and require trajectories reach the target set $X_T = \{ x \ | \ \|x\|^2 \leq 0.1\}$. The bounding set is $X = \{ x \ | \ \|x\|^2 \leq 1\}$. We apply the proposed control design method with $k = 2$, and handle trigonometric terms in the same manner as the pendulum example. The SDP takes $49$ minutes to solve. The resulting controller is able to stabilize a large set of initial conditions. Figure \ref{fig:quad_plot} illustrates a few representative trajectories of the closed-loop system.

\begin{figure}
 \centering
   \includegraphics[width=0.95\columnwidth]{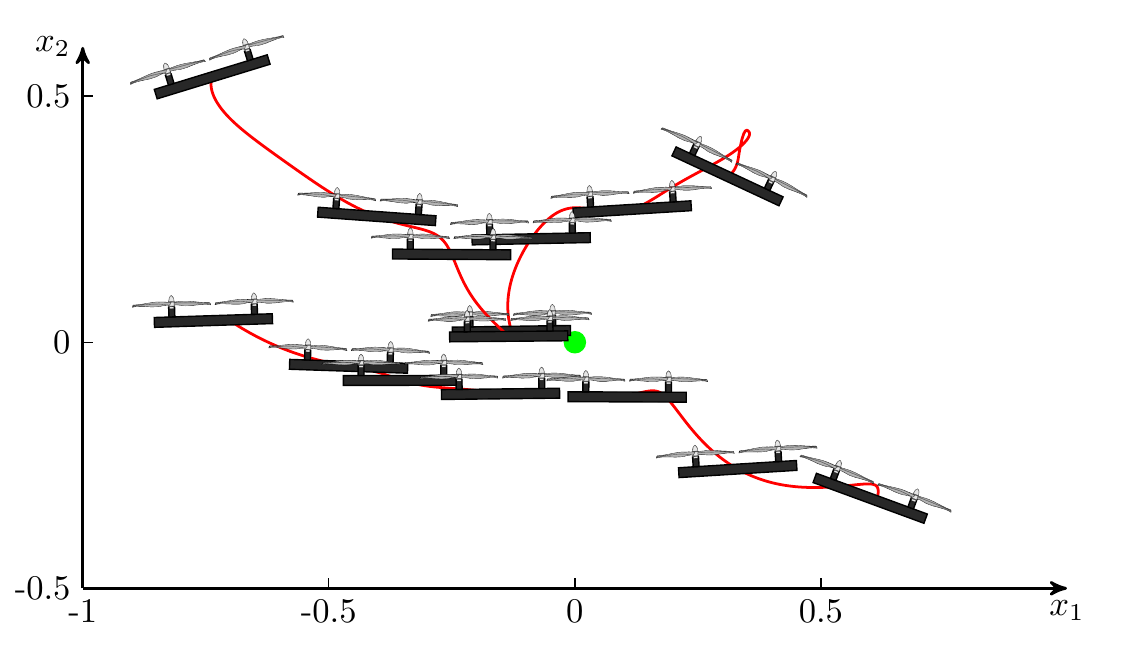}
\vspace*{-5pt}
    \caption{Four trajectories drawn in red generated by
      our algorithm for the planar quadrotor. Each drawn time sample
      of the car is colored black with props on the upward--facing direction. The origin is marked by a green dot.
    \label{fig:quad_plot}}
\end{figure}

\subsection{Satellite Attitude Control}
Finally, we demonstrate the scalability of our approach on a more complicated $6$ state system with $3$ inputs
describing attitude control of a satellite with thrusters applying
torques.  The
dynamics are defined by 
\begin{flalign*}
  H \dot \omega =\: -\Omega(\omega)H\omega + u, \
  \dot \psi = \: \frac{1}2 (I+\Omega(\psi)+\psi\psi^T)\omega,
\end{flalign*}
where $\omega \in \RR^3$ are the angular velocities in the body-frame,
$\psi \in \RR^3$ represent the attitude as  modified Rodriguez parameters (see
\cite{Prajna04a}), $\Omega: \RR^3 \to \RR^{3 \times 3} $ is the
matrix defined so that $\Omega(\psi)\omega = $ is the cross
product $\psi  \times \omega$,
  and $H\in \RR^{3\times 3}$ is the inertia matrix.  We let $H$ be
  diagonal with
  $[H]_{11}=2,$ $[H]_{22}=1$ and $[H]_{33} = \frac{1}2$.
  
  We take the input constraint set as $U = [-1,1]^3$ and the origin as a
  target set.  
We apply the proposed control design methods with
  $k=3$.  Solving the SDP took approximately $6$ hours. 
 Figure~\ref{fig:satellite} examines the controller performance.  A
 set of initial conditions are sampled from a hyperplane, and those
 whose solutions arrive near the target set are highlighted. We note that a SDP with $k=2$ takes only $5$ minutes, but yields a controller and BRS approximations that are slightly inferior, but still useful in practice.

\begin{figure}
  \centering
  \includegraphics[width=0.95\columnwidth]{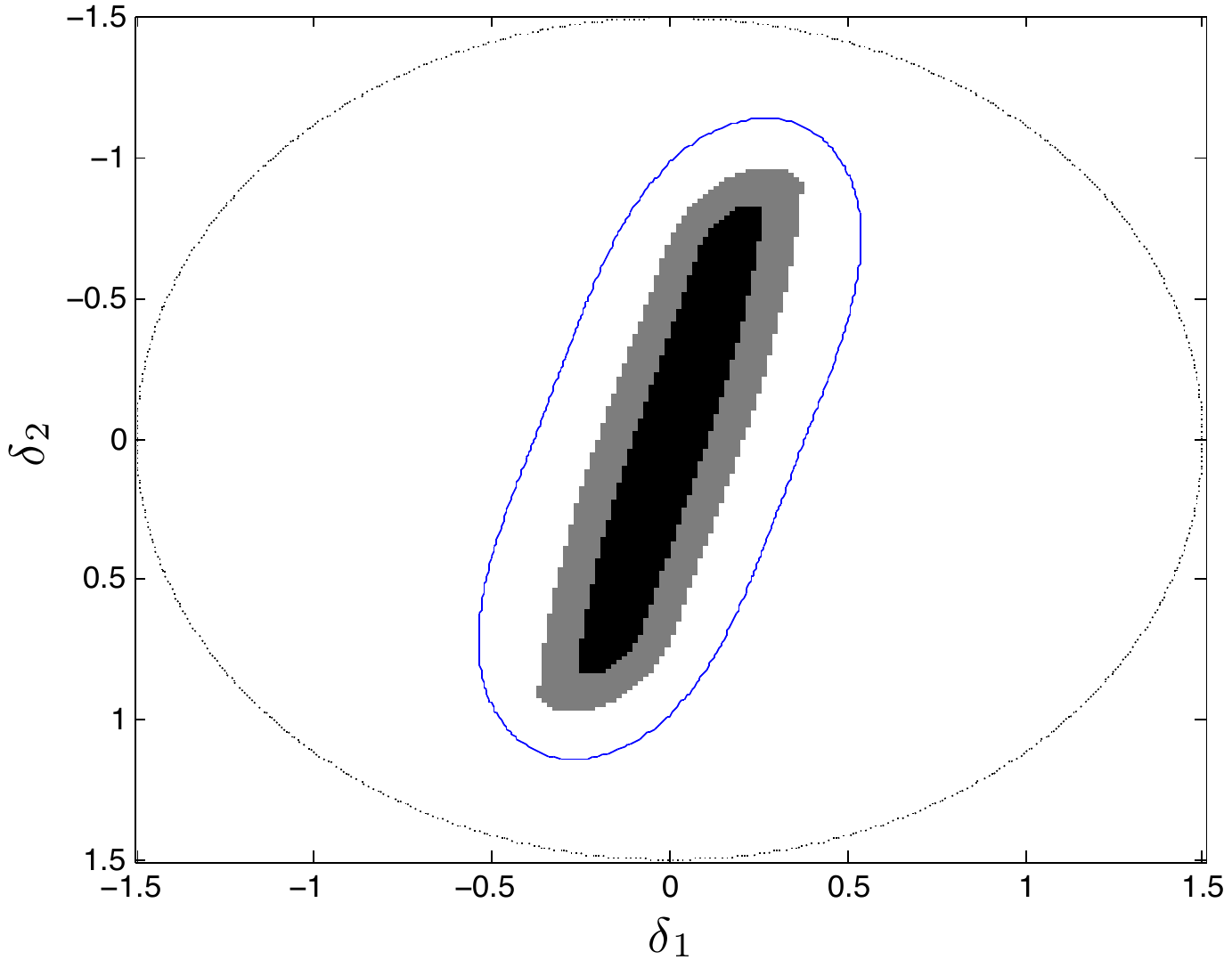}
  \caption{
    Demonstration of the controller performance for $k=3$.  Points are
    sampled from the bounding set (dashed black) and the set
    where $w(x) \leq 1$ (boundary in blue).  To excite coupled
    dynamics between the angular velocities, initial conditions are chosen from the hyperplane with
    coordinates $(\delta_1,\delta_2)$ given by $\delta_1 =
    (\psi_1+\psi_2)/\sqrt{2}$ and $\delta_2 = (\dot \omega_1 +
    \dot \omega_2)/\sqrt{2}$.  Black (resp. grey) points indicate initial conditions
    whose controller solution satisfies $\|x(T)\| \leq 0.1$
    (resp. $\|x(T)\| \leq 0.2$).
    \label{fig:satellite} 
  }
\end{figure}

\section{Conclusion}
\label{sec:conclusion}

We presented an approach for designing feedback controllers that maximize the size of the BRS by posing an infinite dimensional LP over the space of non-negative measures.
Finite dimensional approximations to this LP in terms of SDPs can be used to obtain outer approximations of the largest achievable BRS and polynomial control laws that approximate the optimal control law. In contrast to previous approaches relying on Lyapunov's stability criteria, our method is inherently convex and does not require feasible initialization.
The proposed method can be used to augment existing feedback motion planning techniques that rely on sequencing together BRSs in order to drive some desired set of initial conditions to a given target set.
The number of distinct controllers required by such algorithms could be significantly reduced (potentially down to a single feedback law) by using our algorithm. By reasoning about the nonlinear dynamics of a robotic system, our algorithm should also be able to obtain improved performance during dynamic tasks while maintaining robustness. 

We are presently pursuing convergence results that guarantee the set-wise convergence of the BRS of the controllers generated via \eqref{eq:controllers} to the largest achievable BRS, which is stronger than the result in Theorem \ref{thm:controller convergence}.
We are also working to extend our method to hybrid dynamical systems. 
Our approach potentially can address the difficulties that linearization based approaches face due to the inherent nonlinearities associated with hybrid systems such as walking robots. 
\section*{Acknowledgements}
The authors are grateful to Milan Korda for many helpful discussions.  
This work was supported by ONR MURI grant N00014-09-1-1051, NSF Contract IIS-1161679 and the Siebel Scholars Foundation. 


\bibliographystyle{abbrv} 
\bibliography{elib}

\end{document}